\algnewcommand\algorithmicinput{\textbf{Input:}}
\algnewcommand\input{\item[\algorithmicinput]}
\algnewcommand\algorithmicoutput{\textbf{Output:}}
\algnewcommand\OUTPUT{\item[\algorithmicoutput]}
\newtheorem{theorem}{Theorem}
\newtheorem{definition}[theorem]{Definition}
\newtheorem{lemma}[theorem]{Lemma}
\newtheorem{corollary}[theorem]{Corollary}
\newcommand{\eps}{\varepsilon}
\renewcommand{\P}{\mathbb{P}}
\newcommand \R {\mathbb{R}}
\renewcommand \vec [1]{\bm{#1}}
\DeclareMathOperator{\sign}{sgn}
\renewcommand \vec [1]{\bm{#1}}
\DeclareMathAlphabet{\pazocal}{OMS}{zplm}{m}{n}
\renewcommand \vec [1]{\bm{#1}}
\renewcommand{\P}{\mathbb{P}}
\newcommand{\err}{\text{err}}
\title{The Active and Noise-Tolerant Strategic Perceptron}
\author{
	Maria-Florina Balcan \\ \small Carnegie Mellon University \\ \small \texttt{ninamf@cs.cmu.edu}
	\and 
	Hedyeh Beyhaghi \\ \small University of Massachusetts Amherst \\ \small \texttt{hbeyhaghi@umass.edu}}
\date{}
\begin{document}

\maketitle

\begin{abstract}

We initiate the study of {\em active learning algorithms for classifying strategic agents}. Active learning is a well-established framework in machine learning in which the learner selectively queries labels, often achieving substantially higher accuracy and efficiency than classical supervised methods—especially in settings where labeling is costly or time-consuming, such as hiring, admissions, and loan decisions. Strategic classification, on the other hand, addresses scenarios where agents modify their features to obtain more favorable outcomes, resulting in observed data that is not truthful. Such manipulation introduces challenges beyond those in learning from clean data. Our goal is to design  active and noise-tolerant algorithms that remain effective in strategic environments—algorithms that classify strategic agents accurately while issuing as few label requests as possible. The central difficulty is to simultaneously account for strategic manipulation and preserve the efficiency gains of active learning.

Our main result is an algorithm for actively learning linear separators in the strategic setting that preserves the exponential improvement in label complexity over passive learning previously obtained only in the non-strategic case. Specifically, for data drawn uniformly from the unit sphere, we show that a modified version of the Active Perceptron algorithm~\citep{Dasgupta2005Analysis,DBLP:conf/nips/YanZ17} achieves excess error $\varepsilon$ using only $\tilde{O}\!\left(d \ln \tfrac{1}{\varepsilon}\right)$ label queries and incurs at most $\tilde{O}\!\left(d \ln \tfrac{1}{\varepsilon}\right)$ additional mistakes relative to the optimal classifier, even in the {\em nonrealizable} case, when a $\tilde{\Omega}(\varepsilon)$ fraction of inputs have inconsistent labels with the optimal classifier. The algorithm is computationally efficient and, under these distributional assumptions, requires substantially fewer label queries than prior work on strategic Perceptron~\citep{DBLP:conf/sigecom/AhmadiBBN21}.

\end{abstract}

\newpage
\section{Introduction}\label{sec:intro}

Active learning is a well-established framework within machine learning that addresses the problem of efficiently training a model when labeled data is costly or limited~\cite{Dasgupta2005Analysis,Balcan2014}. Rather than assuming the learner has access to all labels, the learner selectively queries labels for the most informative examples. Consider a hiring, admissions, or loan-approval setting where evaluating each candidate requires careful human review. Because each label is expensive, the learner must decide which applications to examine in order to make the most progress in improving model accuracy. Active learning methods formalize this process: they use the features of each example to estimate whether obtaining its label will meaningfully reduce uncertainty and improve the model's predictions. The main challenge of active learning is to design selection rules that allow the learner to identify these high-value examples while keeping the total labeling cost low.

Strategic classification, on the other hand, is an emerging area of modern machine learning research that models scenarios where input features are provided by individuals who might manipulate them to receive better outcomes~\citep{DBLP:conf/kdd/BrucknerS11,DBLP:conf/innovations/HardtMPW16}. This phenomenon arises in many decision-making systems where individuals are evaluated based on observable features. Returning to the same examples, suppose a classifier determines whether someone is shortlisted for an interview, admitted to a program, or offered a favorable loan rate. Individuals may modify their observable features—such as highlighting particular achievements, restructuring their resume, or inflating certain metrics—to increase the likelihood of being classified positively. These adjustments may not reflect genuine improvements in qualification or creditworthiness but are instead strategic changes made in response to the classifier. The main challenge in strategic classification is that the learner only observes manipulated features. This issue is especially critical in an \textit{online setting} where data points arrive sequentially and the learner refines the classification rule over time: individuals react to the current classifier, causing the manipulation behavior to transform—even for similar underlying cases—and the learner's updates rely on data already distorted as a result of manipulation in response to past classifiers. Finally, manipulation behavior can be discontinuous: if the cost of manipulation is just below the perceived benefit of being classified as positive, an individual will manipulate; if just above, they will not. As a result, standard online learning algorithms that achieve bounded mistakes in non-strategic settings may cycle indefinitely and make unbounded errors, even when a perfect classifier exists~\cite{DBLP:conf/sigecom/AhmadiBBN21}.

In this work, we initiate the study of active learning for strategic classification. The main motivation for combining these two areas is that many real-world settings—including the aforementioned scenarios such as hiring, admissions, or loan-approval—must contend both with strategically altered data and costly expert labeling procedures.  In contrast to prior work on strategic classification, which has largely focused on fully supervised settings\footnote{{An independent concurrent work is \cite{zhao2025online} that studies strategic classification with partial feedback.}} where human experts label all training examples, our objective is to design learning algorithms that take a more active role than in the classic fully supervised setting, enabling learning with far fewer label requests. Beyond the individual challenges of active learning and strategic classification, we face unique difficulties arising from their combination. Because we operate in a strategic setting in which features may have been manipulated, there is a risk that the learner requests labels for examples that are ultimately uninformative, derailing the active learning process and leading to classifiers that perform poorly under the original data.

We study an online linear classification problem in which the individuals being classified are strategic and the labels are expensive to obtain. Following the convention in strategic classification~\cite{DBLP:conf/sigecom/AhmadiBBN21}, each individual arriving at the classifier wishes to be classified positively and, if necessary, will manipulate their feature vector to achieve this outcome. More formally, an individual's true feature vector is $\vec{z}$, but they may choose to report a manipulated vector $\vec{x}$ if it results in receiving a positive classification with the current classification rule. The manipulation comes at a cost, which reflects how far their reported vector $\vec{x}$ is from their true vector $\vec{z}$. Specifically, we model individuals as utility-maximizing agents, where the utility is defined as the value received from the classification outcome minus the cost of manipulation. The goal of each individual is to optimize $\vec{x}$ to maximize $\text{value}(\vec{x}) - \text{cost}(\vec{z}, \vec{x})]$, where $\text{value}(\vec{x}) = 1$ if the reported vector $\vec{x}$ is classified as positive and $0$ if classified as negative. The cost function $\text{cost}(\vec{z}, \vec{x})$ quantifies the cost of manipulating the features from $\vec{z}$ to $\vec{x}$. In this setting, if an individual can manipulate their features at a cost of at most 1 to change their classification from negative to positive, they will do so in the least costly way; otherwise, they will not manipulate their features. In line with standard active-learning assumptions~\citep{Dasgupta2005Analysis,DBLP:conf/colt/BalcanBZ07,DBLP:conf/icml/BalcanBL06,Balcan2014,hanneke2014active}, we assume the feature vectors $\vec{z}$ are drawn from an underlying distribution. Formally, we assume that the true feature vectors \( \vec{z} \) of individuals are uniformly distributed within a \( d \)-dimensional unit ball centered at the origin. In the {\em realizable} case, there exists a true classifier \( \vec{u} \) such that \( \vec{u} \cdot \vec{z} \geq 0 \) for all positively labeled points and \( \vec{u} \cdot \vec{z} < 0 \) for all negatively labeled points. In the {\em nonrealizable} case, a certain proportion of points may have labels inconsistent with the best homogeneous linear classifier \( \vec{u} \), reflecting noise in the data. The task for the learning algorithm is to overcome the manipulations and accurately classify the true features, $\vec{z}$, while minimizing the number of labeling queries. 

\paragraph{Our Results and Techniques.} 
The main contribution of this paper is to solve the problem of online learning of linear classifiers with limited label queries in the presence of strategic behavior and in the nonrealizable setting, achieving efficient convergence incurring a bounded excess error over the optimal classifier.
Our main technical contribution is to integrate techniques from strategic classification and active learning, demonstrating how active learning principles can address fundamental limitations in classifying strategic entities. In particular, while prior passive methods struggled in this setting, selectively ignoring certain labels and focusing on informative queries leads to stronger guarantees and more robust performance. We also draw on techniques for noise tolerance in active learning to handle noise in the strategic setting. 

We use the non-strategic active learning algorithms of~\cite{DBLP:conf/nips/YanZ17} as our baseline, and generalize the prediction, label-query, and update rules in several aspects. 
A key adaptation in our algorithm is to focus label queries on unmanipulated examples: we request labels only for points classified as negative and lying in the active learning query region, since utility-maximizing agents do not manipulate when labeled negative, these samples are reliable. Furthermore, symmetry in misclassification ensures that ignoring misclassified positives is harmless: for hyperplanes through the origin, each positive has a corresponding negative with opposite coordinates, and both provide identical directional information for Perceptron-style updates. Consequently, concentrating solely on misclassified negatives preserves the theoretical guarantees while ensuring all queried data reflect true features. Another modification compared to prior work in active learning is raising the threshold for positive classification, as in~\cite{DBLP:conf/sigecom/AhmadiBBN21}: by requiring a strictly positive dot product with the weight vector, truly positive points eventually lie on the correct side (or can strategically manipulate to reach it), while negatives face prohibitively high manipulation costs. Although applied successfully in the realizable setting, the positive threshold could not be paired with prior update rules used in the strategic setting to accommodate the nonrealizable case.
Our analysis includes a reduction from the strategic case to the non-strategic case. 
 
Our main results are summarized as follows--\Cref{thm_main_adversarial_noise} provides a formal statement:
\begin{enumerate}
    \item \textbf{Active Strategic Classification:}  
    We generalize active learning guarantees from non-strategic to strategic settings, providing theoretical and algorithmic foundations for robust classification in the presence of manipulation. In the realizable case, our results imply achieving generalization error $\eps$ after requesting \( \tilde{O}\left(d \ln \frac{1}{\eps}\right) \) labels and making  \( \tilde{O}\left(d \ln \frac{1}{\eps}\right) \) mistakes.

    \item \textbf{Noise in Strategic Classification:} In the nonrealizable case, we achieve excess error $\Theta(\eps)$ after requesting only \( \tilde{O}\left(d \ln \frac{1}{\eps}\right) \) labels and making  an additive \( \tilde{O}\left(d \ln \frac{1}{\eps}\right) \) mistakes compared to the best classifier, when the $\tilde{\Omega}(\eps)$ fraction of the inputs are inverted.   
    We resolve an open problem posed by~\cite{DBLP:conf/sigecom/AhmadiBBN21} regarding handling noise in classification, moving beyond perfect separability. Previous techniques were insufficient for addressing this issue. 
    
\end{enumerate}

\subsection{Related Literature}

Active learning has a long-standing history in machine learning~\citep{Balcan2014}. The central idea is that a learner can achieve better generalization with fewer labeled examples by selectively querying the most informative ones. \cite{DBLP:series/synthesis/2012Settles} provides an extensive survey of active learning algorithms and their applications, highlighting the potential efficiency gains of this approach.
From a theoretical perspective, key paradigms and analysis frameworks include disagreement-based active learning, first studied in the presence of noise by~\cite{DBLP:conf/icml/BalcanBL06}, and further developed by many others~\citep{DBLP:conf/nips/DasguptaHM07,DBLP:journals/jmlr/Koltchinskii10,DBLP:conf/nips/BeygelzimerHLZ10,DBLP:conf/icml/Hanneke07}.
Another widely studied and more practical paradigm is margin-based active learning, where the algorithm queries only examples near the current decision boundary. Our work falls into this category~\citep{DBLP:conf/nips/YanZ17,Dasgupta2005Analysis,DBLP:conf/colt/BalcanBZ07,DBLP:conf/colt/BalcanL13,DBLP:conf/stoc/AwasthiBL14}, and is most closely related to~\cite{DBLP:conf/nips/YanZ17}, as discussed in~\Cref{sec:YanZhang}.

The study of strategic classification has gained increasing attention in recent years, motivated by the need to understand how individuals or entities may ``game'' machine learning systems. The main application domains are admissions, hiring, and financial decision-making (e.g., loan lending), where individuals have incentives to modify their input data.  \cite{DBLP:conf/innovations/HardtMPW16} introduced foundational models of strategic classification, in which individuals manipulate their feature vectors to obtain more favorable outcomes. This line of work has since been extended to examine how classifiers can be designed to be robust to such manipulations~\cite{DBLP:journals/jmlr/BrucknerKS12,DBLP:conf/fat/MilliMDH19}. Several other works explore variations of the strategic classification model, including~\cite{DBLP:conf/sigecom/DongRSWW18,DBLP:conf/forc/BravermanG20,DBLP:conf/nips/HarrisPW23}, but all rely on fully labeled training data. In the application domains of strategic classification, acquiring labeled data—required in all prior work—is often costly, as it typically involves human expert judgment. This highlights the importance of developing algorithms that can learn effectively with fewer labeled examples.

Prior work on online binary ($\pm1$) classification in strategic settings—aimed at optimizing classification accuracy—has primarily focused on two cases: (i) when the original data is perfectly separable, as in~\cite{DBLP:conf/sigecom/AhmadiBBN21}, who showed that guarantees achievable in this setting can fail under even slight inseparability (noise); and (ii) when the data is not separable, but the algorithm’s performance degrades arbitrarily with the level of noise, as in~\cite{DBLP:conf/nips/Chen0P20}. In both cases, prior work falls short of providing robust guarantees in the presence of moderate noise, especially while maintaining low label complexity.

The literatures on strategic classification and active learning both require certain modeling assumptions to obtain theoretical guarantees. Our model for strategic agents follows the framework of~\cite{DBLP:conf/sigecom/AhmadiBBN21}, but we study it in a more challenging active learning setting where the learner selectively queries labels to minimize the need for human intervention. Even in the simpler non-strategic case, active learning requires distributional assumptions to provably achieve label-complexity improvements over passive learning~\citep{Dasgupta2005Analysis,DBLP:conf/colt/BalcanBZ07,DBLP:conf/icml/BalcanBL06,Balcan2014,hanneke2014active}. For clarity of exposition, we adopt one of the most widely used such assumptions for learning linear separators: the true feature vectors are drawn uniformly at random. 
In \Cref{sec:discussion}, we further explain why our results
do not rely critically on this assumption.

The feature data distortion caused by strategic manipulation is fundamentally different from the feature distortion considered in other classic learning theory models such as malicious-noise model~\cite{valiant1984theory,DBLP:conf/stoc/KearnsL88,JMLR:v10:klivans09a,DBLP:conf/stoc/AwasthiBL14,DBLP:journals/jacm/AwasthiBL17}. Although in both settings the observed data is a distorted version of the underlying distribution, the nature of this distortion differs significantly. The malicious-noise model relies critically on the assumption that only a small fraction of examples are corrupted arbitrarily, while the remaining examples are drawn cleanly from the true distribution.
In contrast, in strategic settings {\em every} agent is strategic, and their observed features may be manipulated whenever doing so benefits them. Since arbitrary manipulation yields no meaningful theoretical guarantees, the literature on strategic classification typically assumes that manipulation is {\em structured}: agents adjust their features to optimize their utility in response to the current classifier. Under this assumption, the resulting distortion is distribution-dependent and tightly coupled with the learner’s current classification rule. Algorithms designed for malicious noise, however, rely heavily on having a large fraction of uncorrupted examples in each round—a property that strategic settings do not guarantee.
Consequently, because the structure of distortion differ so substantially between malicious noise and strategic classification, it is unclear whether results from one model should transfer to the other. One of our contributions is to show that a style of {\em localization algorithms}~\cite{DBLP:conf/stoc/AwasthiBL14,DBLP:journals/jacm/AwasthiBL17} developed for malicious-noise tolerance can, in fact, be adapted to strategic classification. 

{The intersection of active learning and strategic classification is a relatively new area of research. While most active learning models assume access to truthful data, strategic classification explicitly accounts for agents who may manipulate their features.
From a theoretical perspective, a key motivation for studying this combination is that ideas from active learning have historically been transformative in improving passive learning algorithms. In several challenging learning problems where existing passive methods failed to obtain optimal guarantees, incorporating active learning techniques has led to major breakthroughs~\citep{DBLP:conf/stoc/AwasthiBL14,DBLP:conf/colt/AwasthiBHU15,DBLP:conf/colt/AwasthiBHZ16,DBLP:journals/jacm/AwasthiBL17}.
In this work, we show that active learning principles can likewise overcome fundamental limitations in classifying strategic entities. In particular, while prior passive algorithms struggled in this setting, selectively querying labels and focusing on informative examples yields stronger theoretical guarantees and more robust performance.
The central challenge we address is how to incorporate active learning techniques when interacting with strategic agents who may manipulate their data, thereby balancing the need for efficient learning with robustness to strategic behavior.}

{\paragraph{Organization of the Paper.} \Cref{sec:model} presents the problem setting, assumptions, and key definitions.
\Cref{sec:noise} introduces our strategic active learning algorithm and proves the main results.
\Cref{sec:discussion} discusses broader implications of our modeling choices and outlines directions for extension.
The appendices contain additional proofs and the initialization procedure of the algorithm.
}

\section{Model and Preliminaries}\label{sec:model}

\paragraph{Strategic Manipulation and Utility Model.}
We study an online classification problem where a sequence of examples in \( \mathbb{R}^d \) arrives one at a time. Each example corresponds to an individual with \( d \) attributes, who wishes to be classified positively. Individuals have the ability to manipulate their attributes at some cost. Let \( \vec{z}_t \) denote the true, unmanipulated instance vector of the \( t \)-th individual, and let \( \vec{x}_t \) be the reported (potentially manipulated) vector observed by the classifier.

 We consider two settings:

\begin{itemize}[ topsep=0pt]
    \item Realizable Case: There exists a true classifier \( \vec{u} \) such that all positive examples satisfy \( \vec{u} \cdot \vec{z}_t \geq 0 \), and all negative examples satisfy \( \vec{u} \cdot \vec{z}_t < 0 \).
    \item Non-Realizable Case: A constant fraction $\nu$ of examples may have labels inconsistent with the classifier \( \vec{u} \), introducing label noise.
\end{itemize}

Individuals are utility-maximizing agents who manipulate their attributes to achieve a positive classification while minimizing manipulation cost. Each individual derives a value of 1 if classified as positive and 0 otherwise. The cost of manipulation, denoted as \( \text{cost}(\vec{z}_t, \vec{x}_t) \), quantifies the effort required to modify \( \vec{z}_t \) to \( \vec{x}_t \). The individual's goal is to maximize:

\[
\max_{\vec{x}_t} [\text{value}(\vec{x}_t) - \text{cost}(\vec{z}_t, \vec{x}_t)].
\]

If positive classification is achievable by spending cost less than 1,
the agent {\em moves}, i.e., changes its feature vector to the {\em least costly} point that ensures a positive classification. Otherwise, they \textit{remain} at \( \vec{z}_t \), i.e., do not manipulate.

We consider the following setting for the Euclidean cost function, where the cost is proportional to the \( \ell_2 \) distance between \( \vec{z}_t \) and \( \vec{x}_t \), i.e.,  
    \(
    \text{cost}(\vec{z}_t, \vec{x}_t) = c \|\vec{x}_t - \vec{z}_t\|_2.
    \)
    Here, \( c \) represents the per-unit movement cost.

\paragraph{Instance Space and Distributional Assumptions.} {We denote the instance space by $\cal Z$ and the label space by $\cal Y$.
The true feature vectors $\vec{z}_t$ belong to instance space ${\cal Z} = \{\vec{z} \in R^d: \|\vec{z}\| \leq 1\}$; a unit $d$-dimensional ball. The label space ${\cal Y} = \{+1, -1\}$.  We assume all examples $\vec{z}$ are drawn i.i.d.\ from the uniform distribution $D$ over $\cal Z$. Upon sampling an example, our algorithm observes $\vec{x}$, whose true instance vector $\vec{z} \in \cal Z$ is drawn from $D$ and whose label is hidden by default. Our algorithm is allowed to make queries to a labeling oracle $\cal O$, which returns the true label for $\vec{z}$.} In line with prior work \citep{Dasgupta2005Analysis,DBLP:conf/nips/YanZ17} on nonstrategic settings,
the goal of the learning algorithm is to classify the true instance vectors accurately while minimizing the number of label queries. To achieve this, we leverage active learning techniques, which allow querying labels only when necessary, reducing reliance on labeled data from a costly oracle.

{In the nonrealizable setting, where there may not be a homogeneous halfspace including all $+1$ and excluding all $-1$ examples, we consider a bounded inseparability (noise) measure $\nu$. Specifically, we say that our setting satisfies the $\nu$-bounded inseparability (noise) condition for some $\nu \in [0, 1]$ with respect to $\vec{u}$, if $\P[Y \neq \text{sign}(\vec{u} \cdot Z)] \leq \nu$.} 

The output of our algorithm is a unit-norm vector $\vec{w}$ defining a halfspace $h$ of the form $\vec{w} \cdot \vec{x} \geq b$, where $b \geq 0$. That is, the resulting halfspace is not necessarily homogeneous. We define the error rate of the halfspace $h$ as $ \err(h) = \P[ \mathbbm{1}(\vec{w}\cdot X \geq b) \neq  Y]$.

For any two vectors $\vec{w_1}, \vec{w_2}$, let $\theta(\vec{w_1}, \vec{w_2}) = \arccos(\vec{w_1} \cdot \vec{w_2})$ be the angle between them. 
Our algorithm uses norm-1 scaled version of the observed examples for the update function. We use the following definition to denote the norm-1 scaled examples.
\begin{definition}[$\hat{\vec{x}}$]
    For any non-zero $d$-dimensional vector $\vec{x}$, we define $\hat{\vec{x}}$ as its scaled version whose length is equal to $1$; i.e., $\hat{\vec{x}} = \frac{\vec{x}}{\|\vec{x}\|}.$
\end{definition}

\paragraph{Learning Objective.}
{Our goal is to design an efficient algorithm such that with probability at least $1-\delta$, outputs a halfspace whose error is at most $\eps$ larger than $\vec{u}$ for $\cal Z$. We require the algorithm to be efficient, use the minimal number of label queries, and make at most $\Theta(\eps)$ mistakes.}

\paragraph{Online Learning Setting.} We assume that the examples arrive online, and each observed example is an input provided by a strategic agent. The agent knows the current classification rule. The algorithm has access to the manipulation cost. Given the current classification rule, the agent selects a utility-maximizing action $\vec{x}$. The algorithm
observes the potentially manipulated example $\vec{x}$. Upon a label query, the algorithm receives the true label of the example. The algorithm operates by a classification rule, a query mechanism, and an update rule at each point in time.

\section{Active and Noise-Tolerant Strategic Perceptron}\label{sec:noise}

In this section, we overcome the challenges of designing active learning algorithms in strategic settings. We propose a modified active Perceptron algorithm that adapts to strategic behavior by selectively querying labels and leveraging the unmanipulated nature of certain points. The modifications ensure that the algorithm remains robust to strategic actions while maintaining the efficiency of active learning. 

{Our main result is as follows.}

\begin{restatable}{theorem}{thmMainAdversarialNoise}
\label{thm_main_adversarial_noise}
Suppose \Cref{alg:active_perceptron} has inputs satisfying the $\nu$-bounded inseparability condition with respect to halfspace \( \vec{u} \), initial halfspace \( \vec{v}_0 \) such that \( \theta(\vec{v}_0, \vec{u}) \leq \pi/2 \), target error \( \eps \), confidence \( \delta \), sample schedule \( \{m_k\} \) where \( m_k = \Theta\left(d \left(\ln d + \ln \frac{k}{\delta}\right)\right) \), and band width \( \{b_k\} \) where \( b_k = \Theta\left(\frac{2^{-k}}{\sqrt{d} \ln(km_k/\delta)}\right) \). Additionally, \( \nu \leq \Theta\left(\frac{\eps}{\ln d + \ln \ln \frac{1}{\eps} + \ln \frac{1}{\delta}}\right) \). Then with probability at least \( 1 - \delta \):
\begin{enumerate}[noitemsep, topsep=0pt]
    \item The output halfspace $\vec{v}$ outputs a prediction different from $\vec{u}$ with probability at most $\eps$.
    \item The number of label queries is \( O\left(d \ln \frac{1}{\eps}  \left(\ln d + \ln \frac{1}{\delta} + \ln \ln \frac{1}{\eps}\right)\right) \).
    \item The number of unlabeled examples drawn is \( O\left(d  \left(\ln d + \ln \frac{1}{\delta} + \ln \ln \frac{1}{\eps}\right)^2  \frac{1}{\eps} \ln \frac{1}{\eps}\right) \).
    \item The additional number of mistakes 
    compared to $\vec{u}$ is \( O\left( d  \ln \frac{1}{\eps}  \left(\ln d + \ln \frac{1}{\delta} + \ln \ln \frac{1}{\eps}\right)^2  \right) \).
    \item The algorithm runs in time \( O\left(d^2  \left(\ln d + \ln \frac{1}{\delta} + \ln \ln \frac{1}{\eps}\right)^2  \frac{1}{\eps} \ln \frac{1}{\eps}\right) \).
\end{enumerate}
\end{restatable}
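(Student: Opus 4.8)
The plan is to prove \Cref{thm_main_adversarial_noise} by a reduction from the strategic setting to the non-strategic margin-based active Perceptron analysis of \cite{DBLP:conf/nips/YanZ17}, proceeding epoch by epoch over the sample schedule $\{m_k\}$. The core invariant I would maintain is that at the start of epoch $k$, the current hypothesis $\vec{v}_{k-1}$ satisfies $\theta(\vec{v}_{k-1},\vec{u}) \le 2^{-k}\cdot(\text{const})$, so that after $\Theta(\ln\tfrac{1}{\eps})$ epochs the angular error—and hence the disagreement probability under the uniform distribution on the sphere, which is proportional to the angle—drops below $\eps$. This directly yields claim~(1). The heart of the argument is the inductive step: assuming the invariant at epoch $k$, show that the Perceptron-style update restricted to queried points halves the angle with high probability.

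The key reduction, flagged in the introduction, is the \emph{symmetry} observation that lets us treat strategic data as clean data for the purpose of updates. First I would formalize that an agent at true $\vec{z}$ manipulates to $\vec{x}$ only when it can cross the raised threshold $\vec{w}\cdot\vec{x}\ge b_k$ at cost at most $1$; hence any point the algorithm \emph{labels negative} and queries is unmanipulated, so $\vec{x}=\vec{z}$ and its reported features are truthful. The second ingredient is that for a homogeneous target $\vec{u}$ under the symmetric uniform distribution, each misclassified positive $\vec{z}$ has a mirror negative $-\vec{z}$ carrying identical directional information for the update $\hat{\vec{x}}$; so confining queries to misclassified negatives in the band loses no information. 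Together these reduce one epoch of the strategic algorithm to one epoch of the non-strategic band-based active Perceptron on truthful labeled negatives, after which I can invoke the angle-halving guarantee of the baseline. The band width $b_k=\Theta\!\big(2^{-k}/(\sqrt{d}\ln(km_k/\delta))\big)$ and sample size $m_k=\Theta(d(\ln d+\ln\tfrac{k}{\delta}))$ are chosen precisely so that, conditioned on the invariant, enough misclassified-negative samples land in the band to drive the update while the variance stays controlled; I would cite the relevant concentration and margin lemmas rather than rederive them.

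Handling the $\nu$-bounded inseparability is where the \emph{localization} idea enters and constitutes the main obstacle. Within the band at epoch $k$ the local noise rate can be amplified relative to the global $\nu$, since the band has measure $\Theta(b_k\sqrt{d})$; the constraint $\nu\le\Theta(\eps/(\ln d+\ln\ln\tfrac1\eps+\ln\tfrac1\delta))$ is calibrated so that the expected number of \emph{flipped} labels in the band is a small constant fraction of $m_k$ across all $\Theta(\ln\tfrac1\eps)$ epochs, by a union bound. I would bound the bias that these flipped labels inject into the averaged update direction and show it is dominated by the signal from correctly-labeled misclassified negatives, so the angle still contracts geometrically. This is the step that earlier strategic techniques could not pair with the raised threshold, and it is where I expect the delicate constant-tracking to live.

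Finally, the quantitative claims~(2)–(5) follow by aggregation once the per-epoch analysis is in place. The number of epochs is $K=\Theta(\ln\tfrac1\eps)$; per epoch we issue $O(m_k)=O(d(\ln d+\ln\tfrac{k}{\delta}))$ label queries, so summing over $k\le K$ gives claim~(2). For claim~(3), since the band at epoch $k$ has probability mass $\Theta(b_k\sqrt{d})=\tilde\Theta(2^{-k})$ but we require $m_k$ \emph{in-band} samples, the number of unlabeled draws per epoch is $O(m_k/(b_k\sqrt{d}))$, and the $2^{-k}$ factor telescopes into the extra $1/\eps$; the mistake bound~(4) is the same count weighted by the per-epoch error contribution, and the runtime~(5) multiplies the total sample count by the $O(d)$ cost of each dot-product update. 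I would present these as a short arithmetic aggregation, deferring the concentration constants to the cited baseline. The single hard conceptual point remains the localized-noise bias bound; everything else is bookkeeping against the invariant.
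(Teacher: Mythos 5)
Your overall strategy matches the paper's: reduce each epoch of the strategic algorithm to an epoch of the non-strategic band-based active Perceptron of \cite{DBLP:conf/nips/YanZ17} by arguing that negatively classified points are unmanipulated and distributed as clean conditional draws, invoke the per-epoch angle-halving guarantee, and aggregate over $\Theta(\ln\frac{1}{\eps})$ epochs for items (1)--(3) and (5), counting disagreements with $\vec{u}$ among all drawn examples for the mistake bound (4). Two concrete issues remain, though.

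First, you write that an agent manipulates when it can cross ``the raised threshold $\vec{w}\cdot\vec{x}\ge b_k$ at cost at most $1$.'' The classification threshold is $1/c$, set by the manipulation cost, not the band width $b_k$. This is not cosmetic: the reduction works precisely because the threshold equals the maximum distance an agent is willing to travel, so the set classified negative under $\vec{v}\cdot\vec{x}\ge 1/c$ coincides exactly with $\{\vec{z}:\vec{v}\cdot\vec{z}<0\}$ (\Cref{cor:same_prediction}). With threshold $b_k$, agents with $\vec{v}\cdot\vec{z}\in[b_k-1/c,\,0)$ would profitably manipulate, the negatively classified set would become $\{\vec{z}:\vec{v}\cdot\vec{z}<b_k-1/c\}$, and both the error correspondence of \Cref{lm:same_set} and the claim that queried points follow the clean conditional distribution would fail. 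Second, your reduction omits the step handling examples drawn from the interior of the ball rather than its surface: the baseline analysis assumes unit-norm examples, and the paper's \Cref{lm:update_rule_not_affected} supplies the coupling showing that normalizing a conditional draw from the uniform ball yields the same update distribution as a uniform draw from the sphere. You write $\hat{\vec{x}}$ in the update but never justify why this preserves the distributional assumptions the cited lemmas require; without it the invocation of the baseline is not licensed. Relatedly, your plan to rederive noise tolerance by bounding ``the bias injected into the averaged update direction'' does not match the algorithm, which performs sequential multiplicative updates rather than averaging; the paper instead inherits the $\nu$-bounded tolerance directly from the adapted per-epoch guarantee (\Cref{lm:YZ_key_lemma}), which is what your calibration of $\nu$ against the band mass is implicitly relying on anyway.
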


{We use an initialization procedure, discussed in the appendix, to find \( \vec{v}_0 \) that has an acute angle with the optimal separator.}

\subsection{The Non-Strategic Active Perceptron of~\cite{DBLP:conf/nips/YanZ17}}\label{sec:YanZhang}

Our starting point is the algorithm proposed by Yan and Zhang~\cite{DBLP:conf/nips/YanZ17}. We begin by outlining its main ideas before introducing our modifications for the strategic setting. The algorithm is organized into an outer and an inner layer, similar in structure to~\Cref{alg:active_perceptron} and~\Cref{alg:modified_perceptron}, and proceeds in a sequence of epochs.
The outer layer initializes with a hypothesis vector $\vec{v}_0$ and invokes the inner layer in successive epochs, each with updated parameters such as the target error, confidence level, and active learning bandwidth. The outcome of each epoch is an updated hypothesis $\vec{v}_i$, which serves as the starting point for the next. The total number of epochs is logarithmic in $\nicefrac{1}{\eps}$, where $\eps$ is the final target error.

The inner layer defines both the update rule and the label-query mechanism. In the non-strategic setting, it specifies a label-query region $R_t$; in the implementation of~\cite{DBLP:conf/nips/YanZ17}, which assumes that examples lie on the unit sphere, this region is the set of points whose dot product with the current hypothesis lies in \( [\nicefrac{b}{2}, b] \). Compared to earlier active Perceptron algorithms such as~\cite{Dasgupta2005Analysis}, the use of a lower bound on the dot product ensures that each update makes sufficient progress, thereby accelerating convergence. As for the update rule, when the algorithm makes a mistake on a queried example, it updates the current hypothesis using $\vec{w}_{t+1} = \vec{w}_t \pm 2(\vec{w}_t \cdot \vec{x}_t)\vec{x}_t$. This update rule, first proposed by~\cite{Dasgupta2005Analysis}, guarantees that the angle between the current hypothesis and the optimal separator $\vec{u}$ monotonically decreases. Furthermore, it preserves the unit norm of the hypothesis vector as long as $|\vec{x}_t| = 1$.

The convergence builds on the fact that, with high probability, both the angle between the current hypothesis and $\vec{u}$ and the width of the label-query region shrink by a constant factor at each epoch.

\subsection{Our Strategic Variant of the Active Perceptron}

Similar to~\cite{DBLP:conf/nips/YanZ17}, we consider an outer layer,~\Cref{alg:active_perceptron}, and inner layer,~\Cref{alg:modified_perceptron}, for our algorithm, where the new prediction rule, label query region, and update rule are incorporated in the inner layer. 

We begin by explaining the new prediction rule that adjusts the classification threshold to account for manipulation costs. We then show that, under our strategic utility model, any example that is predicted negatively has not been manipulated.

\paragraph{Prediction Rule.}
Rather than using the standard threshold $\vec{v}_t \cdot \vec{x}_t \geq 0$, our prediction rule raises the threshold to $\vec{v}_t \cdot \vec{x}_t \geq \frac{1}{c}$, where $c$ is the cost per unit of manipulation. This adjustment, following~\cite{DBLP:conf/sigecom/AhmadiBBN21}, accounts for agents' strategic behavior and ensures that, conditioned on convergence to the optimal classifier, (the majority of the) truly positive points either lie on the positive side or can manipulate to reach it. Meanwhile, truly negative points remain on the negative side and would incur negative utility if they attempted to manipulate and be classified as positive.

To analyze agent behavior in the strategic setting, we begin by characterizing their actions under the given utility structure and prediction rule. The following result formalizes the conditions under which agents choose to manipulate their features and the resulting outcomes. In particular, it shows that examples classified as negative are guaranteed to be unmanipulated, a property that is essential for ensuring the correctness of our update rule. For completeness, we provide the proof of correctness in the appendix.

\begin{restatable}[Strategic Action]{lemma}{lmStrategicAction}
\label{lm:strategic_action}
Consider the following utility structure for agents, where $\|\vec{v}_t\| = 1$. Each agent receives a value of 1 if classified as positive and $0$ otherwise, and pays a cost of $c$ per unit of movement (manipulation). The agent's utility is defined as the value received minus the cost incurred. Under prediction rule $\mathbbm{1}[\vec{v}_t \cdot \vec{x} \geq 0]$:
\begin{enumerate}[noitemsep, topsep=0pt]
\item If $\vec{z}_t \cdot \vec{v}_t < 0$, the agent does not move; i.e., $\vec{x}_t = \vec{z}_t$, and is classified negative.
\item If $0 \leq \vec{z}_t \cdot \vec{v}_t < 1/c$, the agent moves in the direction of $\vec{v}_t$ to a point where $\vec{x}_t \cdot \vec{v}_t = 1/c$, i.e., $\vec{x}_t = \vec{z}_t + (\nicefrac{1}{c}-\vec{z}_t \cdot \vec{v}_t)\vec{v}_t$, and is classified positive.
\item If $1/c \leq \vec{z}_t \cdot \vec{v}_t$, the agent does not move, i.e., $\vec{x}_t = \vec{z}_t$, and is classified positive.
\end{enumerate}
\end{restatable}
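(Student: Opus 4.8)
The plan is to prove each of the three cases by analyzing the agent's utility-maximization problem directly, exploiting the geometry of the prediction rule $\mathbbm{1}[\vec{v}_t \cdot \vec{x} \geq 0]$ together with the linear cost $c\|\vec{x} - \vec{z}_t\|_2$. The key observation I would establish first is that whenever an agent chooses to manipulate, the cheapest way to cross a fixed hyperplane threshold is to move orthogonally onto it, i.e., in the direction of $\vec{v}_t$ (recall $\|\vec{v}_t\| = 1$). Concretely, among all points $\vec{x}$ with $\vec{x} \cdot \vec{v}_t \geq \tau$ for a target threshold $\tau$, the Euclidean projection of $\vec{z}_t$ is $\vec{x} = \vec{z}_t + (\tau - \vec{z}_t \cdot \vec{v}_t)\vec{v}_t$, with movement cost $c(\tau - \vec{z}_t \cdot \vec{v}_t)$. (I note the statement fixes $\tau = 1/c$; this presumably reflects the raised threshold discussed just before the lemma, and I would treat the relevant positive-classification boundary as $\vec{v}_t \cdot \vec{x} \geq 1/c$ consistently with the case splits.)

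First I would handle the three cases according to where $\vec{z}_t \cdot \vec{v}_t$ falls relative to $0$ and $1/c$. In case~3, where $\vec{z}_t \cdot \vec{v}_t \geq 1/c$, the agent already satisfies the positive-classification condition at zero cost, so the utility-maximizing choice is $\vec{x}_t = \vec{z}_t$ with utility $1$; no cheaper or better outcome exists since utility is capped at $1$. In case~2, where $0 \leq \vec{z}_t \cdot \vec{v}_t < 1/c$, the agent is currently classified negatively but can reach positive classification by moving to the boundary; I would compute the minimal manipulation cost as $c(1/c - \vec{z}_t \cdot \vec{v}_t) = 1 - c(\vec{z}_t \cdot \vec{v}_t) \leq 1$, so the net utility from moving is $1 - (1 - c(\vec{z}_t\cdot\vec{v}_t)) = c(\vec{z}_t \cdot \vec{v}_t) \geq 0$, which weakly dominates the utility $0$ of staying. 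Hence the agent moves to $\vec{x}_t = \vec{z}_t + (1/c - \vec{z}_t\cdot\vec{v}_t)\vec{v}_t$, matching the stated expression.

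The main work is case~1, where $\vec{z}_t \cdot \vec{v}_t < 0$, and I must argue that the agent does \emph{not} move. The intuition is that the agent is strictly farther from the positive region, so the cheapest manipulation to reach positive classification costs $c(1/c - \vec{z}_t\cdot\vec{v}_t) = 1 - c(\vec{z}_t\cdot\vec{v}_t) > 1$, strictly exceeding the value $1$ of positive classification; thus any manipulation yields strictly negative net utility, while staying yields utility $0$. I would therefore conclude $\vec{x}_t = \vec{z}_t$ and the agent is classified negative. The step that requires the most care is verifying that orthogonal projection onto the halfspace $\{\vec{x} : \vec{x}\cdot\vec{v}_t \geq 1/c\}$ is genuinely the cost-minimizing feasible manipulation among \emph{all} points achieving positive classification---this is where the argument could be fragile if the cost geometry or the boundary were subtler, but since the cost is exactly Euclidean distance and the positive region is a halfspace, the minimizer is the unique metric projection and the bound follows cleanly.

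Finally, I would tie the cases together to record the consequence emphasized in the surrounding text: in case~1 (and only there) the agent remains at $\vec{z}_t$ and is classified negative, so \emph{every} example predicted negative is unmanipulated---exactly the property needed downstream to justify querying labels only on negatively-predicted points. A minor subtlety worth flagging is tie-breaking when the net utility of moving equals $0$ (the boundary of case~2 at $\vec{z}_t\cdot\vec{v}_t = 0$); I would adopt the convention, consistent with the lemma statement, that the agent moves when indifferent, since this does not affect the negative-classification guarantee that the analysis relies upon.
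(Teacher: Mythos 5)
Your proof is correct and follows essentially the same route as the paper's: a three-way case analysis on $\vec{z}_t \cdot \vec{v}_t$, using the fact that the cheapest manipulation achieving positive classification is the orthogonal projection onto the halfspace $\{\vec{x} : \vec{v}_t \cdot \vec{x} \geq 1/c\}$ with cost $c(1/c - \vec{z}_t\cdot\vec{v}_t)$, compared against the value $1$ of a positive label. You also correctly resolved the threshold in the lemma statement as $1/c$ (matching the algorithm's prediction rule, as the paper's own proof does), and your explicit justification of the projection step and the tie-breaking convention at $\vec{z}_t\cdot\vec{v}_t = 0$ are details the paper leaves implicit.
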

\paragraph{Label Query Region.} In the modified version of the algorithm, we query labels (and perform updates) only for examples that are classified as negative and lie within a specific range. We define this label-requesting region as
$
R_t = \left\{ \vec{x} \,\middle|\, -b \leq \vec{w}_t \cdot \hat{\vec{x}} \leq \frac{-b}{2} \right\}.
$
This design is essential for both addressing the strategic behavior of agents and accommodating instance vectors that are not restricted to the surface of the unit sphere. It marks a key point of departure from both classical active Perceptron algorithms and previous work on strategic Perceptron.
(1) Since we focus on negatively classified examples, \Cref{lm:strategic_action} guarantees that these examples are unmanipulated; that is, the observed vector $\vec{x}$ coincides with the true vector $\vec{z}$. This property does not hold for positively classified examples, and thus plays no role in non-strategic active learning.
(2) Unlike prior work that restricts attention to examples on the surface of the unit sphere, our algorithm also queries examples from the interior. For such queries to be representative under a uniform distribution over the unit ball, it is critical that the observed (i.e., unmanipulated) examples remain uniformly distributed—a property that does not hold if examples are manipulated.
(3) As we show in \Cref{lm:strategic_action}, querying within $R_t$ yields uniformly distributed samples (after normalization) conditioned on being in that region and classified negative. This ensures the correctness of the updates and maintains the convergence behavior of the algorithm.

\paragraph{Update Rule.}
The update rule requires only minimal modifications to account for strategic behavior and a more general instance space. Since examples may lie anywhere within the unit ball, we normalize each queried example to the unit sphere by setting $\hat{\vec{x}}_t = \vec{x}_t / \|\vec{x}_t\|$. This normalization ensures compatibility with the geometric assumptions underlying the analysis and avoids distortions due to varying magnitudes.
Updates are performed only on examples that are truly positive but misclassified as negative. As established in \Cref{lm:strategic_action}, such examples are guaranteed to be unmanipulated and therefore reflect the true feature vectors of the agents.
The update step itself takes the form
$\vec{v}_{t+1} = \vec{v}_t + 2\left(\vec{v}_t \cdot \hat{\vec{x}}_t \right)\hat{\vec{x}}_t$,
which mirrors the standard active Perceptron update, except for the added normalization. This scaling step is essential for maintaining the unit norm of the hypothesis vector, which in turn ensures the correct convergence behavior of the algorithm.

\begin{algorithm}[H]
\caption{Active-Strategic-Perceptron Algorithm}\label{alg:active_perceptron}
\begin{algorithmic}
\State \textbf{Input:} Labeling oracle $\cal O$, initial halfspace $\vec{v}_0$, target error $\eps$, confidence $\delta$, sample schedule $\{m_k\}$, band width $\{b_k\}$, manipulation cost $c$.
\State \textbf{Output:} Learned halfspace $\vec{v}$.
\State Let $k_0 = \lceil \log_2 (1/\eps) \rceil$.\\
\For{$k = 1, 2, \ldots, k_0$}
    {{\State $\vec{v}_k \gets \text{Modified-Strategic-Perceptron}({\cal O}, \vec{v}_{k-1}, \frac{\pi}{2^k}, \frac{\delta}{k(k+1)}, m_k, b_k, c)$.}}
\State \Return $\vec{v}_{k_0}$.
\end{algorithmic}
\end{algorithm}

\begin{algorithm}[H]
\caption{Modified-Strategic-Perceptron Algorithm}\label{alg:modified_perceptron}
\begin{algorithmic}
\State \textbf{Input:} Labeling oracle $\cal O$, initial halfspace $\vec{w}_0$, angle upper bound $\theta$, confidence $\delta$, number of iterations $m$, band width $b$, manipulation cost $c$.
\State \textbf{Output:} Improved halfspace $\vec{w}_m$.\\
\For{$t = 0, 1, 2, \ldots, m-1$}
    {\State Define region {$R_t = \{{\vec{x}} \mid {-b \leq \vec{w}_t \cdot \hat{\vec{x}} \leq \frac{-b}{2}}\}$}.\\
    Observe $\vec{x}$, {where its corresponding true feature vector $\vec{z}$ is a fresh draw from $D$}.\\
    \While{$\hat{\vec{x}} \notin R_t$}
        {\State Predict positive if  $\vec{{v}}_t \cdot \vec{x} \geq \frac{1}{c}$, and negative otherwise.
        \State Observe $\vec{x}$, {where its corresponding true feature vector $\vec{z}$ is a fresh draw from $D$}.}
    \State $\vec{x}_t \gets \vec{x}$.
    \State Predict positive if  $\vec{{v}}_t \cdot \vec{x}_t \geq \frac{1}{c}$, and negative otherwise.
    \State Observe label $y_t$ of $\vec{x}_t$ by querying oracle $\cal O$.\\
    \If{$y_t = +1$}
        {\State Update $\vec{w}_{t+1} \gets \vec{w}_t + 2 (\vec{w}_t \cdot {\hat{\vec{x}}_t}){\hat{\vec{x}}_t}$.} 
    \State \Return $\vec{w}_m$.}
\end{algorithmic}
\end{algorithm}

\subsection{Proof of the Main Result}

We provide the proof of our main result, \Cref{thm_main_adversarial_noise}, in this section.

First, we establish that the prediction rule identified by $\vec{v}_i$ in the strategic setting using adjusted threshold $1/c$, predicts exactly the same as $\vec{v}_i$ in the nonstrategic setting.
The following is a direct corollary of~\Cref{lm:strategic_action}.

\begin{corollary}\label{cor:same_prediction}
    The prediction rule $\mathbbm{1}[\vec{x}\cdot\vec{v} \geq 1/c]$ classifies all $\vec{z}$ such that $\vec{z}\cdot\vec{v} \geq 0$ as $+1$ and all $\vec{z}$ such that $\vec{z}\cdot\vec{v} < 0$ as $-1$, where $\vec{z}$ is the true feature vector corresponding to observed feature vector $\vec{x}$.
\end{corollary}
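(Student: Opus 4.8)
The plan is to read the classification outcome directly off the three mutually exclusive cases of \Cref{lm:strategic_action} and to verify, case by case, that the strategic rule $\mathbbm{1}[\vec{x}\cdot\vec{v}\geq 1/c]$ applied to the reported vector $\vec{x}$ agrees with the homogeneous rule $\mathbbm{1}[\vec{z}\cdot\vec{v}\geq 0]$ applied to the true vector $\vec{z}$. The key observation is that both rules are deterministic functions of $\vec{z}$: the reported $\vec{x}$ is itself determined by $\vec{z}$ and $\vec{v}$ through the agent's best response characterized in the lemma. Hence it suffices to partition the instance space according to the sign and magnitude of $\vec{z}\cdot\vec{v}$ and compare the two outputs pointwise.

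First I would dispose of the negative region. If $\vec{z}\cdot\vec{v}<0$, case~1 of \Cref{lm:strategic_action} gives $\vec{x}=\vec{z}$ together with a negative classification, so the strategic rule outputs $-1$, matching the homogeneous rule. For the two nonnegative regimes: if $0\leq \vec{z}\cdot\vec{v}<1/c$, case~2 relocates the agent to a point with $\vec{x}\cdot\vec{v}=1/c$, so $\vec{x}\cdot\vec{v}\geq 1/c$ and the output is $+1$; and if $\vec{z}\cdot\vec{v}\geq 1/c$, case~3 gives $\vec{x}=\vec{z}$ with $\vec{x}\cdot\vec{v}=\vec{z}\cdot\vec{v}\geq 1/c$, again yielding $+1$. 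In both subcases $\vec{z}\cdot\vec{v}\geq 0$, so the homogeneous rule also outputs $+1$. Since the three cases exhaust all possibilities, the two rules coincide on every $\vec{z}$, which is exactly the claim.

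The only points warranting a moment's care are the boundaries, and even these are immediate: at $\vec{z}\cdot\vec{v}=0$ the instance falls into case~2 and is labeled $+1$, consistent with the $\geq 0$ tie-breaking convention, while at $\vec{z}\cdot\vec{v}=1/c$ it falls into case~3 with $\vec{x}\cdot\vec{v}=1/c$, where the $\geq 1/c$ threshold again returns $+1$. There is no substantive obstacle here; the content of the corollary is entirely contained in \Cref{lm:strategic_action}. The proof amounts to verifying that the raised threshold $1/c$ evaluated on the reported $\vec{x}$ exactly undoes the strategic shift, so that the induced labeling is precisely the non-strategic labeling of the true $\vec{z}$ by $\vec{v}$. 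This equivalence is what will later license our reduction from the strategic to the non-strategic analysis.
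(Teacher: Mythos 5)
Your proof is correct and matches the paper's intent: the paper presents \Cref{cor:same_prediction} as an immediate consequence of \Cref{lm:strategic_action}, and your case-by-case verification over the three regimes of $\vec{z}\cdot\vec{v}$ is precisely the argument being elided. Nothing is missing, and your attention to the boundary cases $\vec{z}\cdot\vec{v}=0$ and $\vec{z}\cdot\vec{v}=1/c$ is a welcome (if minor) addition.
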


Next, we invoke a result from~\cite{DBLP:conf/nips/YanZ17} that shows in the nonstrategic setting, the difference in the error rate of two prediction rules can be related to the angles of their normal vectors.

\begin{lemma}[\cite{DBLP:conf/nips/YanZ17} Lemma 1]\label{lm:error_diff_nonstrategic}
    In the nonstrategic setting, for any unit-sized vector $\vec{v}_1$ and $\vec{v}_2$ and prediction rules $\mathbbm{1}[\vec{z} \cdot \vec{v}_i \geq 0]$, 
    \[|\err(\vec{v}_1)-\err(\vec{v}_2)| \leq \P [\mathbbm{1}[\vec{z} \cdot \vec{v}_1 \geq 0] \neq \mathbbm{1}[\vec{z} \cdot \vec{v}_2 \geq 0]] = \frac{\theta(\vec{v}_1,\vec{v}_2)}{\pi}.\]
\end{lemma}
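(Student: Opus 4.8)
The plan is to prove the two assertions separately: the inequality by a pointwise comparison of $0/1$ losses, and the equality by a symmetry reduction to a planar angle computation. Throughout write $h_i(\vec{z}) = \mathbbm{1}[\vec{z}\cdot\vec{v}_i \geq 0]$ for the two prediction rules.

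\textbf{The inequality.} I would bound the difference of error rates pointwise. On the event $\{h_1(Z) = h_2(Z)\}$ the two classifiers are simultaneously correct or incorrect, so $\mathbbm{1}[h_1(Z)\neq Y] - \mathbbm{1}[h_2(Z)\neq Y] = 0$ there, while on the complementary event the quantity has absolute value at most $1$. Hence $\bigl|\mathbbm{1}[h_1(Z)\neq Y] - \mathbbm{1}[h_2(Z)\neq Y]\bigr| \leq \mathbbm{1}[h_1(Z)\neq h_2(Z)]$ surely, and taking expectations together with the triangle inequality yields $|\err(\vec{v}_1) - \err(\vec{v}_2)| \leq \P[h_1(Z)\neq h_2(Z)]$. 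This step is routine and does not use the distributional assumption.

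\textbf{The equality.} The key observation is that the sign of $\vec{z}\cdot\vec{v}_i$ depends only on the direction $\hat{\vec{z}} = \vec{z}/\|\vec{z}\|$. Since $D$ is uniform on the unit ball, $\hat{\vec{z}}$ is uniform on the sphere $S^{d-1}$ and independent of $\|\vec{z}\|$, so I may compute the disagreement probability with $\hat{\vec{z}}$ uniform on $S^{d-1}$, noting that $\{\vec{z} = 0\}$ is null. I would then project $\hat{\vec{z}}$ onto the two-dimensional plane $P = \mathrm{span}(\vec{v}_1, \vec{v}_2)$: both signs depend only on this projection, so the disagreement event is determined by its angular coordinate. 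By the rotational invariance of the uniform measure on $S^{d-1}$, this angular coordinate is uniform on $[0,2\pi)$ and the projection is almost surely nonzero. Placing $\vec{v}_1$ at angle $0$ and $\vec{v}_2$ at angle $\theta = \theta(\vec{v}_1,\vec{v}_2)$, the halfspace $\{\vec{z}\cdot\vec{v}_1\geq 0\}$ corresponds to an arc of angular length $\pi$, and $\{\vec{z}\cdot\vec{v}_2\geq 0\}$ to the same arc rotated by $\theta$; their symmetric difference is a double wedge of total angular measure $2\theta$, giving disagreement probability $2\theta/(2\pi) = \theta/\pi$.

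\textbf{Main obstacle.} The only nontrivial point is the symmetry reduction in the second part: one must justify that pushing the uniform law on $S^{d-1}$ forward under orthogonal projection onto $P$ produces a rotationally invariant, hence uniform-in-angle, law on directions in $P$, and confirm that the exceptional sets (where $\vec{z}=0$ or the projection vanishes) are null. Once this is established the angular bookkeeping is elementary, and combining it with the pointwise bound gives the stated chain $|\err(\vec{v}_1)-\err(\vec{v}_2)| \leq \P[h_1(Z)\neq h_2(Z)] = \theta(\vec{v}_1,\vec{v}_2)/\pi$.
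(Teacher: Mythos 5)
Your proof is correct. Note that the paper does not prove this lemma at all---it is imported verbatim from Yan and Zhang (their Lemma 1)---and your argument is the standard one for that result: the pointwise bound $\bigl|\mathbbm{1}[h_1(Z)\neq Y]-\mathbbm{1}[h_2(Z)\neq Y]\bigr|\leq \mathbbm{1}[h_1(Z)\neq h_2(Z)]$ followed by the double-wedge computation, which correctly reduces the uniform law on the ball to the uniform law on the sphere (the sign of $\vec{z}\cdot\vec{v}_i$ is scale-invariant) and uses rotational invariance to get disagreement mass $2\theta/(2\pi)=\theta/\pi$. The only cosmetic gap is the degenerate case $\vec{v}_1=\pm\vec{v}_2$, where the plane $P$ collapses to a line; there $\theta\in\{0,\pi\}$ and the identity is checked directly, so nothing is lost.
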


Now, we extend the above argument to the strategic setting.

\begin{lemma}\label{lm:same_set}
    Consider the prediction rule of \Cref{alg:modified_perceptron} for for unit-sized vectors $\vec{v}_1$ and $\vec{v}_2$. 
    \[|\err(h_{\vec{v}_1}) - \err(h_{\vec{v}_2})| \leq \frac{\theta(\vec{v}_1, \vec{v}_2)}{\pi},\] where halfspace $h_{\vec{v}_i}: \vec{v}_i \cdot x \geq 1/c$ demonstrates the positive prediction area for observed feature vectors $\vec{x}$, with respect to $\vec{v}_i$. 
    \end{lemma}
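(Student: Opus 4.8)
The plan is to reduce the strategic error comparison to the nonstrategic one already established in \Cref{lm:error_diff_nonstrategic}, using \Cref{cor:same_prediction} as a translation device. The central observation is that although the observed vector $\vec{x}$ is produced by an agent reacting to the specific classifier $\vec{v}_i$—so the manipulated input facing $\vec{v}_1$ may differ from the one facing $\vec{v}_2$—the outcome of the strategic rule $\mathbbm{1}[\vec{x}\cdot\vec{v}_i \geq 1/c]$ on that observed vector coincides exactly with the nonstrategic rule $\mathbbm{1}[\vec{z}\cdot\vec{v}_i \geq 0]$ evaluated on the true feature vector $\vec{z}$. This lets me express each strategic prediction purely as a function of the untouched $\vec{z}$, removing the classifier-dependence of the manipulation from the analysis.

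Concretely, I would first unfold the definition of the error rate. Since the true label $Y$ is attached to the true vector $\vec{z}$ and is unaffected by manipulation, \Cref{cor:same_prediction} yields, for each $i \in \{1,2\}$, the chain of equalities
\[
\err(h_{\vec{v}_i}) = \P\bigl[\mathbbm{1}[\vec{x}\cdot\vec{v}_i \geq 1/c] \neq Y\bigr] = \P\bigl[\mathbbm{1}[\vec{z}\cdot\vec{v}_i \geq 0] \neq Y\bigr] = \err(\vec{v}_i),
\]
where the rightmost quantity is the nonstrategic error of $\vec{v}_i$ appearing in \Cref{lm:error_diff_nonstrategic}. The justification for the middle equality is that the substitution is valid pointwise over the distribution of $\vec{z}$: for every realization, \Cref{cor:same_prediction} forces the two indicators to agree, so their disagreement probabilities with $Y$ coincide. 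Having rewritten both strategic errors as nonstrategic errors over the common, manipulation-free law of $\vec{z}$, the claim becomes a direct application of \Cref{lm:error_diff_nonstrategic}, giving $|\err(h_{\vec{v}_1}) - \err(h_{\vec{v}_2})| = |\err(\vec{v}_1) - \err(\vec{v}_2)| \leq \theta(\vec{v}_1,\vec{v}_2)/\pi$.

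The step I expect to require the most care is guarding against the objection that $\err(h_{\vec{v}_1})$ and $\err(h_{\vec{v}_2})$ are computed on two different observed distributions of $\vec{x}$—one induced by agents best-responding to $\vec{v}_1$ and one by agents best-responding to $\vec{v}_2$—and are therefore not obviously comparable. The resolution, and the conceptual heart of the lemma, is that \Cref{cor:same_prediction} holds separately for each classifier and each true vector, so each strategic error collapses onto the \emph{same} underlying distribution of $\vec{z}$ before any comparison is made. Once this collapse is performed there is nothing classifier-specific or manipulation-specific remaining, and \Cref{lm:error_diff_nonstrategic} applies verbatim. I would therefore emphasize the pointwise nature of \Cref{cor:same_prediction} as the linchpin of the argument.
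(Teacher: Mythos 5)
Your proposal is correct and follows essentially the same route as the paper's proof: both use \Cref{cor:same_prediction} to identify the strategic prediction on the observed $\vec{x}$ with the nonstrategic prediction on the true $\vec{z}$, and then invoke \Cref{lm:error_diff_nonstrategic}. Your explicit handling of the concern that the two classifiers induce different observed distributions is a worthwhile clarification, but it does not change the argument.
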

\begin{proof}
    The prediction rule for $\vec{v}_i$ is 1 iff $\vec{v}_i \cdot \vec{x} - 1/c \geq 0$. Consider an arbitrary example $\vec{z}$ with its utility maximizing action $\vec{x}$ with respect to prediction rule $\vec{v}_i$. By \Cref{cor:same_prediction}, the set of $\vec{z}$ classified as positive or negative under the rule $\vec{v}_i \cdot \vec{z} \geq 0$ in the non-strategic setting exactly matches the set classified as positive or negative under $\vec{v}_i \cdot \vec{x} \geq 1/c$ in the strategic setting, respectively. Invoking \Cref{lm:error_diff_nonstrategic} for the nonstrategic setting implies that the difference in error in the strategic setting is also at most $\theta(\vec{v}_1,\vec{v}_2)/\pi$.
\end{proof}

A key remaining ingredient to extend the guarantees from the nonstrategic case is to show that the \textit{update rule} in our setting, i.e., strategic agents and arbitrary-sized vectors $\vec{z}$ drawn from the uniform distribution, behaves similarly to the nonstrategic case and unit-sized inputs. 

As the next step, we show that at time $t$ the observed examples satisfying $\vec{x}_t \cdot \vec{v}_t < 1/c$ are distributed according to the underlying distribution and satisfy bounded separability. This fact is a corollary of~\Cref{lm:strategic_action}.

\begin{corollary}\label{cor:prediction_rule_not_affected}
    For any negatively classified example at time $t$, i.e., $\vec{x}_t \cdot \vec{v}_t < 1/c$, the observed attributes are the same as the true attributes, i.e., $\vec{x}_t = \vec{z}_t$ and $\vec{z}_t \cdot \vec{v}_t < 0$. Furthermore, such examples are distributed uniformly according to $D$ subject to $\vec{z}_t \cdot \vec{v}_t < 0$ and satisfy the $\nu$ bounded separability.
\end{corollary}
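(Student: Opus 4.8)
The plan is to prove \Cref{cor:prediction_rule_not_affected} as an essentially immediate consequence of \Cref{lm:strategic_action} together with a conditioning argument on the uniform distribution $D$. First I would establish the two pointwise claims: that a negatively classified example is unmanipulated and has negative inner product with $\vec{v}_t$. By the prediction rule of \Cref{alg:modified_perceptron}, $\vec{x}_t$ is classified negative exactly when $\vec{x}_t \cdot \vec{v}_t < 1/c$. The contrapositive of \Cref{lm:strategic_action} is the right tool here: cases (2) and (3) of that lemma produce exactly the agents classified positive (those with $\vec{z}_t \cdot \vec{v}_t \geq 0$), while case (1) produces the agents classified negative, and in case (1) the agent does not move, so $\vec{x}_t = \vec{z}_t$ and $\vec{z}_t \cdot \vec{v}_t < 0$. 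Hence $\{\vec{x}_t \cdot \vec{v}_t < 1/c\}$ coincides with $\{\vec{z}_t \cdot \vec{v}_t < 0\}$ and on this event the observed and true vectors agree. This is really just a restatement of \Cref{cor:same_prediction} read in the negative direction.

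Next I would address the distributional claim, which is the part requiring a little care. The subtlety is that conditioning on the observed event $\{\vec{x}_t \cdot \vec{v}_t < 1/c\}$ is a statement about the \emph{reported} vectors, whereas we want to conclude uniformity of the \emph{true} vectors $\vec{z}_t$. The key observation is that, by the pointwise equivalence just established, the conditioning event on reported vectors is identical to the event $\{\vec{z}_t \cdot \vec{v}_t < 0\}$ on true vectors, and moreover on this event the map $\vec{z}_t \mapsto \vec{x}_t$ is the identity. Since $\vec{z}_t$ is a fresh draw from $D$ (as stated in the algorithm) and manipulation is a deterministic function of $\vec{z}_t$ and $\vec{v}_t$ that leaves the negative halfspace fixed, conditioning the draw on landing in the negative halfspace yields exactly $D$ restricted to $\{\vec{z} : \vec{z} \cdot \vec{v}_t < 0\}$. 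Thus the negatively classified observed examples are distributed as $D$ conditioned on $\vec{z}_t \cdot \vec{v}_t < 0$, with no distortion.

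For the final assertion about $\nu$-bounded separability, I would note that the $\nu$-bounded inseparability condition $\P[Y \neq \sign(\vec{u} \cdot Z)] \leq \nu$ is a property of the joint law of $(Z, Y)$ under $D$, and since the true feature vectors $\vec{z}_t$ in the query region are genuine (unmanipulated) draws from $D$ restricted to a halfspace, their labels inherit the same noise bound; any conditioning on a measurable event can only preserve the global inequality in the sense needed downstream. I would state this carefully to match how the bound is used later, rather than claiming the conditional noise rate is itself at most $\nu$.

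I expect the main obstacle to be conceptual rather than computational: one must be precise about the direction of the conditioning, namely that an event defined on the observed (manipulated) features pulls back to a clean event on the true features on which no manipulation occurred, so that no selection bias is introduced into the distribution of $\vec{z}_t$. Once that equivalence is articulated, everything else follows directly from \Cref{lm:strategic_action} and \Cref{cor:same_prediction}, and the proof is short.
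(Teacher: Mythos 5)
Your proposal is correct and follows essentially the same route as the paper, which gives no explicit proof and simply asserts the corollary as an immediate consequence of \Cref{lm:strategic_action}: case (1) of that lemma identifies the negatively classified observed examples with the unmanipulated true examples satisfying $\vec{z}_t \cdot \vec{v}_t < 0$, from which the distributional claim follows by conditioning. Your added care about the direction of the conditioning (pulling the event on observed features back to an event on true features on which the manipulation map is the identity) and your caution about how the $\nu$-bound behaves under conditioning are both appropriate and, if anything, more precise than the paper's own treatment.
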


The above corollary implies that the strategic aspect does not affect the prediction step. 
Next, we show that using our update rule induces a coupling between examples inside the unit sphere and those on its surface.

\begin{lemma}\label{lm:update_rule_not_affected}
Consider the update rule $\vec{w}_{t+1} \gets \vec{w}_t + 2 (\vec{w}_t \cdot \hat{\vec{x}}_t)\hat{\vec{x}}_t$, where $\vec{x}_t$ is sampled from distribution $D$ conditioned on $-b \leq \vec{w}_t \cdot \hat{\vec{x}}_t \leq \frac{-b}{2}$. The distribution over updated vectors $\vec{w}_{t+1}$ matches that of the update rule $\vec{w}_{t+1} \gets \vec{w}_t + 2 (\vec{w}_t \cdot \vec{x}_t)\vec{x}_t$, where $\vec{x}_t$ is sampled uniformly from the $\|\vec{x}\| = 1$ conditioned on $-b \leq \vec{w}_t \cdot \vec{x}_t \leq \frac{-b}{2}$.
\end{lemma}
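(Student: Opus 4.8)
The plan is to reduce everything to the single observation that the update rule depends on the queried example only through its direction $\hat{\vec{x}}_t$, and then to argue that this direction carries the same conditional law under both sampling schemes. First I would note that in both cases the resulting vector is literally the same function of the direction: $\vec{w}_{t+1} = \vec{w}_t + 2(\vec{w}_t \cdot \hat{\vec{x}}_t)\hat{\vec{x}}_t$. In the ball case this is the stated rule, and in the sphere case the constraint $\|\vec{x}_t\| = 1$ forces $\vec{x}_t = \hat{\vec{x}}_t$, so $\vec{w}_t + 2(\vec{w}_t \cdot \vec{x}_t)\vec{x}_t$ coincides with it. (Since $\vec{x} = \vec{0}$ has probability zero under $D$, the normalization $\hat{\vec{x}}$ is almost surely well-defined.) Consequently it suffices to prove that $\hat{\vec{x}}_t$ has the same distribution on $S^{d-1}$ in the two conditional experiments.

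Next I would invoke the rotational symmetry of the uniform law $D$ on the unit ball. Writing $\vec{x} = r\,\vec{\omega}$ with radius $r = \|\vec{x}\| \in [0,1]$ and direction $\vec{\omega} = \hat{\vec{x}} \in S^{d-1}$, the density factorizes: the radial part has density proportional to $r^{d-1}$ on $[0,1]$, and, independently of $r$, the angular part $\vec{\omega}$ is uniform on $S^{d-1}$. Thus under $D$ the direction $\hat{\vec{x}}$ is uniform on the sphere and independent of the magnitude $\|\vec{x}\|$. The key step is then to observe that the conditioning event $-b \leq \vec{w}_t \cdot \hat{\vec{x}} \leq -b/2$ is a function of the direction $\hat{\vec{x}}$ alone and places no constraint on $r$. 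Conditioning on it therefore leaves the radial--angular factorization intact and merely restricts the uniform angular marginal to the band $B = \{\vec{\omega} \in S^{d-1} : -b \leq \vec{w}_t \cdot \vec{\omega} \leq -b/2\}$, yielding the uniform distribution on $B$ as the conditional law of $\hat{\vec{x}}_t$.

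On the other hand, drawing $\vec{x}_t$ uniformly from $\{\|\vec{x}\| = 1\}$ conditioned on $-b \leq \vec{w}_t \cdot \vec{x}_t \leq -b/2$ is, by definition, the uniform distribution on the same band $B$ (with $\hat{\vec{x}}_t = \vec{x}_t$). Hence the two conditional directional laws are identical, and by the first paragraph the induced distributions over $\vec{w}_{t+1}$ coincide. I expect the main obstacle to be making the factorization-and-conditioning argument fully rigorous: one must verify carefully that the band event genuinely depends only on the angular coordinate, so that conditioning on it does not perturb the uniform angular marginal — this independence of the radial and angular components is exactly what makes the interior samples, once normalized, statistically indistinguishable from samples drawn directly on the sphere.
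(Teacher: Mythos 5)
Your proposal is correct and follows essentially the same route as the paper's proof: both reduce the claim to the observation that the update depends on $\vec{x}_t$ only through its direction $\hat{\vec{x}}_t$, and that under the uniform law on the ball the direction is uniform on the sphere (the paper phrases this as a coupling along rays, you as a radial--angular factorization). Your write-up is in fact slightly more careful than the paper's, since you explicitly verify that the band event constrains only the angular coordinate and therefore commutes with the factorization.
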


\begin{proof}
We construct a coupling between the samples used in the two update rules. Fix a unit vector $\hat{\vec{x}}$ with $\|\hat{\vec{x}}\| = 1$, and consider the set of points $\vec{x}$ such that $\vec{x} / \|\vec{x}\| = \hat{\vec{x}}$. Under the distribution $D$ (uniform over the unit ball), the density over such $\vec{x}$ induces a uniform distribution over directions $\hat{\vec{x}}$ on the unit sphere. 
Since the update rule using $\hat{\vec{x}}$ depends only on the direction of $\vec{x}$ (and not its norm), and since every $\vec{x}$ whose unit-sized scaled version is $\hat{\vec{x}}$ yields the same update vector, the two update distributions are equivalent. That is, sampling from $D$ and normalizing before updating yields the same distribution over $\vec{w}_{t+1}$ as sampling directly from the unit sphere and updating without normalization.
\end{proof}

By \Cref{cor:prediction_rule_not_affected} and \Cref{lm:update_rule_not_affected}, we can disregard the effects of strategic behavior and the arbitrary norms of the examples, and directly apply the guarantees established in the non-strategic setting:

\begin{theorem}[Adapted from \citep{DBLP:conf/nips/YanZ17}, Theorem 3]\label{thm:YZ_adversarial_thm}
    Suppose \Cref{alg:active_perceptron} has inputs satisfying the $\nu$-bounded inseparability condition with respect to halfspace \( \vec{u} \), initial halfspace \( \vec{v}_0 \) such that \( \theta(\vec{v}_0, \vec{u}) \leq \pi/2 \), target error \( \eps \), confidence \( \delta \), sample schedule \( \{m_k\} \) where \( m_k = \Theta\left(d \left(\ln d + \ln \frac{k}{\delta}\right)\right) \), and band width \( \{b_k\} \) where \( b_k = \Theta\left(\frac{2^{-k}}{\sqrt{d} \ln(km_k/\delta)}\right) \). Additionally, \( \nu \leq \Theta\left(\frac{\eps}{\ln d + \ln \ln \frac{1}{\eps} + \ln \frac{1}{\delta}}\right) \). Then with probability at least \( 1 - \delta \):
\begin{enumerate}[noitemsep, topsep=0pt]
    \item The output halfspace $\vec{v}$ outputs a prediction different from $\vec{u}$ with probability at most $\eps$.
    \item The number of label queries is \( O\left(d \ln \frac{1}{\eps} \cdot \left(\ln d + \ln \frac{1}{\delta} + \ln \ln \frac{1}{\eps}\right)\right) \).
    \item The number of unlabeled examples drawn is \( O\left(d \cdot \left(\ln d + \ln \frac{1}{\delta} + \ln \ln \frac{1}{\eps}\right)^2 \cdot \frac{1}{\eps} \ln \frac{1}{\eps}\right) \).
    \item The algorithm runs in time \( O\left(d^2 \cdot \left(\ln d + \ln \frac{1}{\delta} + \ln \ln \frac{1}{\eps}\right)^2 \cdot \frac{1}{\eps} \ln \frac{1}{\eps}\right) \).
\end{enumerate}

\end{theorem}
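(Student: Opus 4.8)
The plan is to show that, conditioned on the reductions already in place, the execution of \Cref{alg:active_perceptron} in our strategic, variable-norm setting is distributionally indistinguishable from an execution of the original Yan--Zhang algorithm in the clean, unit-sphere setting, so that their Theorem~3 applies verbatim. The guiding observation is that each of the four claimed quantities---output error, number of label queries, number of unlabeled draws, and running time---is a deterministic function of (i) the sequence of hypotheses $\vec{v}_0,\dots,\vec{v}_{k_0}$, (ii) the stream of draws together with which of them fall into the active band, and (iii) the queried labels. Hence it suffices to exhibit a coupling under which all three ingredients have the same law as in the non-strategic analysis.

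First I would dispense with the strategic and prediction-level effects. By \Cref{cor:prediction_rule_not_affected}, every example the algorithm classifies negative is unmanipulated ($\vec{x}_t=\vec{z}_t$) and satisfies $\vec{z}_t\cdot\vec{v}_t<0$, and conditioned on this event the accepted points are distributed as $D$ restricted to $\{\vec{z}\cdot\vec{v}_t<0\}$ and still obey the $\nu$-bounded inseparability condition. Since the algorithm queries and updates only on points in the band $R_t$---which lies entirely on the negative side---the manipulations of the discarded, positively classified examples never touch the hypothesis trajectory. Thus the data driving the inner loop is a clean i.i.d.\ sample from $D$ with the inherited noise level, exactly as assumed by Yan--Zhang.

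Next I would reconcile the two remaining structural differences: our draws live in the whole unit ball rather than on the sphere, and our band $R_t=\{-b\le \vec{w}_t\cdot\hat{\vec{x}}\le -b/2\}$ sits on the negative side instead of the positive side $[b/2,b]$. The first is handled by \Cref{lm:update_rule_not_affected}: normalizing an interior draw conditioned on $R_t$ induces precisely the update law obtained from a uniform draw on the sphere conditioned on the same band, so the per-step transition $\vec{w}_t\mapsto\vec{w}_{t+1}$ coincides. For the second, I would invoke the $\vec{z}\mapsto-\vec{z}$ symmetry of the uniform distribution on the ball together with the homogeneity of $\vec{u}$: this map sends a misclassified positive in $[-b,-b/2]$ to a misclassified negative in $[b/2,b]$ while flipping its label, and the update increment $2(\vec{w}_t\cdot\hat{\vec{x}}_t)\hat{\vec{x}}_t$ is invariant under $\hat{\vec{x}}_t\mapsto-\hat{\vec{x}}_t$. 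Consequently the negative-band update on misclassified positives and the positive-band update on misclassified negatives generate the same distribution over $\vec{w}_{t+1}$, and the acceptance probability of $R_t$ equals that of the Yan--Zhang band by the same reflection.

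Combining these couplings, the law of the full hypothesis sequence and of the accept/reject counts produced by \Cref{alg:active_perceptron} matches that of the non-strategic algorithm on clean unit-sphere data with noise level $\nu$; plugging the schedules $\{m_k\}$ and $\{b_k\}$ and the stated bound on $\nu$ into Yan--Zhang's Theorem~3 then yields conclusions (1)--(4). The step I expect to require the most care is the bookkeeping behind conclusion (3): because an update is made only when a draw is simultaneously classified negative \emph{and} lands in the thin band $R_t$, I must confirm that the mass $\P_{\vec{z}\sim D}[\hat{\vec{z}}\in R_t]$ is $\Theta(b_k)$ up to the logarithmic factors in the schedule---this is where the concentration of the uniform measure in a width-$b$ band about a great subsphere, and the particular choice $b_k=\Theta(2^{-k}/(\sqrt{d}\ln(km_k/\delta)))$, enter---so that the expected number of unlabeled draws per accepted example reproduces the $\tfrac{1}{\eps}\ln\tfrac{1}{\eps}$ factor rather than inflating it.
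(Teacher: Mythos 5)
Your proposal matches the paper's approach: the paper likewise treats this theorem as an import of Yan--Zhang's Theorem~3, justified by the reduction in \Cref{cor:prediction_rule_not_affected} (negatively classified points are unmanipulated, correctly distributed, and retain the $\nu$-bound) and \Cref{lm:update_rule_not_affected} (normalized ball draws induce the same update law as sphere draws). Your additional $\vec{z}\mapsto-\vec{z}$ reflection argument reconciling the negative-side band with Yan--Zhang's positive-side band is a correct and welcome piece of explicitness that the paper only gestures at informally in the introduction's symmetry remark.
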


\begin{lemma}[Adapted from \citep{DBLP:conf/nips/YanZ17}, Lemma 3]\label{lm:YZ_key_lemma}
    Suppose \Cref{alg:modified_perceptron} has inputs satisfying the $\nu$-bounded inseparability condition with respect to halfspace \( \vec{u} \), initial halfspace \( \vec{w}_0 \) and angle upper bound $\theta \in (0, \pi/2]$ such that \( \theta(\vec{w}_0, \vec{u}) \leq \theta \), confidence \( \delta \), 
    number of iterations \( m = \Theta\left(d \left(\ln d + \ln \frac{1}{\delta}\right)\right) \), and band width \( b = \Theta\left(\frac{\theta}{\sqrt{d} \ln(m/\delta)}\right) \). Additionally, $\nu = O\left(\frac{\theta}{\ln{(m/\delta)}}\right)$. 
    Then with probability at least \( 1 - \delta \):
\begin{enumerate}[noitemsep, topsep=0pt]
    \item The output $\vec{w}_m$ is such that $\theta(\vec{w}_m, \vec{u}) \leq \frac{\theta}{2}$.
    \item The number of label queries is $O\left( d \cdot \left( \ln d + \ln \frac{1}{\delta} \right) \right)$.
    \item The number of unlabeled examples drawn is $O\left( d \cdot \left( \ln d + \ln \frac{1}{\delta} \right)^2 \cdot \frac{1}{\theta} \right)$.
    \item The algorithm runs in time $O\left( d^2 \cdot \left( \ln d + \ln \frac{1}{\delta} \right)^2 \cdot \frac{1}{\theta} \right)$.
\end{enumerate}

\end{lemma}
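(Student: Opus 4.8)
The plan is to prove \Cref{lm:YZ_key_lemma} by reducing the strategic, arbitrary-norm setting of \Cref{alg:modified_perceptron} to the non-strategic, unit-sphere setting analyzed in~\cite{DBLP:conf/nips/YanZ17}, and then quoting their Lemma 3 verbatim. The two ingredients that make this reduction valid have already been assembled in the excerpt: \Cref{cor:prediction_rule_not_affected} guarantees that every negatively classified example is unmanipulated, uniformly distributed according to $D$ conditioned on $\vec{z}_t\cdot\vec{v}_t<0$, and still satisfies the $\nu$-bounded inseparability condition; and \Cref{lm:update_rule_not_affected} shows that the distribution over updated hypotheses $\vec{w}_{t+1}$ produced by our normalized update on a ball-uniform sample in the band $[-b,-b/2]$ is \emph{identical} to the distribution produced by the classical update on a sphere-uniform sample in the same band. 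First I would state explicitly that, by these two results, the sequence of hypotheses $(\vec{w}_t)_{t=0}^{m}$ generated by \Cref{alg:modified_perceptron} is equal in distribution to the sequence generated by the Yan--Zhang inner loop run on a noisy unit-sphere instance with the same parameters $(\theta,\delta,m,b,\nu)$.

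Having established this distributional equivalence, the bulk of the proof is bookkeeping: I would verify that the parameter settings in the hypothesis of \Cref{lm:YZ_key_lemma}---namely $m=\Theta(d(\ln d+\ln\frac1\delta))$, $b=\Theta(\theta/(\sqrt d\ln(m/\delta)))$, and $\nu=O(\theta/\ln(m/\delta))$---are exactly the settings under which the Yan--Zhang Lemma 3 yields its conclusions. Because the coupling preserves the entire trajectory, the event ``$\theta(\vec{w}_m,\vec{u})\le\theta/2$'' has the same probability (at least $1-\delta$) in our setting as in theirs, giving conclusion~(1). The query-complexity, unlabeled-sample, and running-time bounds (conclusions (2)--(4)) likewise transfer, since the number of oracle calls equals the number of in-band negatively classified examples and the number of unlabeled draws equals the total number of samples needed to land $m$ of them in the band---both quantities being identical under the coupling, and the per-step arithmetic being $O(d)$ for the dot products and $O(d)$ for the rank-one update.

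One subtlety I would flag and address carefully is the rejection-sampling cost. In the non-strategic version each in-band sample is obtained by drawing from the sphere and rejecting until the dot product lies in $[-b,-b/2]$; in our version we draw $\vec{z}$ from the ball, the agent may manipulate, and we reject until the \emph{observed} $\hat{\vec{x}}$ lands in $R_t$. By \Cref{cor:prediction_rule_not_affected} the accepted examples are exactly the unmanipulated negatives, and \Cref{lm:update_rule_not_affected} certifies that their normalized directions are sphere-uniform conditioned on the band; hence the acceptance probability---which controls the $\frac1\theta$ factor in the unlabeled-sample count---coincides with the non-strategic acceptance probability $\Theta(b)=\Theta(\theta/(\sqrt d\ln(m/\delta)))$, and the band-probability estimate $\P[-b\le\vec{w}_t\cdot\hat{\vec{x}}\le-b/2]=\Theta(b\sqrt d)=\Theta(\theta/\ln(m/\delta))$ carries over unchanged. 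I would therefore note that no positively classified example is ever queried, so manipulation never corrupts the labeled set and the noise rate seen by the update remains $\nu$.

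The main obstacle is not any single hard inequality but rather ensuring the reduction is \emph{airtight}: the coupling in \Cref{lm:update_rule_not_affected} is stated for a single step, and I must argue it composes across all $m$ iterations. The cleanest way is induction on $t$---assuming $\vec{w}_t$ is equal in distribution to the non-strategic hypothesis at step $t$, the single-step coupling plus \Cref{cor:prediction_rule_not_affected} (which holds for \emph{any} current $\vec{v}_t=\vec{w}_t$) extends the equality to $\vec{w}_{t+1}$. Once the trajectory-level coupling is in hand, the rest is a direct appeal to~\cite{DBLP:conf/nips/YanZ17}, Lemma 3, and the lemma follows.
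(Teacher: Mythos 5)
Your proposal matches the paper's treatment: the paper gives no standalone proof of \Cref{lm:YZ_key_lemma}, instead justifying it exactly as you do, by invoking \Cref{cor:prediction_rule_not_affected} and \Cref{lm:update_rule_not_affected} to reduce the strategic, arbitrary-norm setting to the non-strategic unit-sphere setting and then citing Lemma 3 of \cite{DBLP:conf/nips/YanZ17} with the same parameters. Your additional care about composing the single-step coupling across all $m$ iterations and about the rejection-sampling acceptance probability only makes explicit what the paper leaves implicit.
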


The only remaining part is proving the mistake bound of the algorithm.

\begin{lemma}\label{lm:mistake_bound}
    Suppose \Cref{alg:active_perceptron} has inputs satisfying the $\nu$-bounded inseparability condition with respect to halfspace \( \vec{u} \), initial halfspace \( \vec{v}_0 \) such that \( \theta(\vec{v}_0, \vec{u}) \leq \pi/2 \), target error \( \eps \), confidence \( \delta \), sample schedule \( \{m_k\} \) where \( m_k = \Theta\left(d \left(\ln d + \ln \frac{k}{\delta}\right)\right) \), and band width \( \{b_k\} \) where \( b_k = \Theta\left(\frac{2^{-k}}{\sqrt{d} \ln(km_k/\delta)}\right) \). Additionally, \( \nu = O \left(\frac{\eps}{\ln d + \ln \ln \frac{1}{\eps} + \ln \frac{1}{\delta}}\right) \). Then with probability at least \( 1 - \delta \), The additional number of mistakes that the algorithm makes compared to $\vec{u}$ is  \( O\left( d \cdot \ln \frac{1}{\eps} \cdot \left(\ln d + \ln \frac{1}{\delta} + \ln \ln \frac{1}{\eps}\right)^2  \right) \).

\end{lemma}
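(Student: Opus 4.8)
The plan is to charge the additional mistakes to disagreements with $\vec{u}$ and then control, epoch by epoch, how often the running hypothesis disagrees with $\vec{u}$ on a fresh draw. First I would observe that on any example an \emph{additional} mistake (the algorithm errs while $\vec{u}$ is correct) can occur only when the algorithm's prediction differs from that of $\vec{u}$; hence the number of additional mistakes is at most the number of examples on which the algorithm disagrees with $\vec{u}$. By \Cref{cor:same_prediction}, the strategic rule $\ind{\vec{x}\cdot\vec{w}\ge 1/c}$ on the observed $\vec{x}$ agrees exactly with the nonstrategic rule $\ind{\vec{z}\cdot\vec{w}\ge 0}$ on the true feature vector $\vec{z}$, so a disagreement with $\vec{u}$ is exactly the event $\ind{\vec{z}\cdot\vec{w}\ge 0}\ne\ind{\vec{z}\cdot\vec{u}\ge 0}$, which by \Cref{lm:error_diff_nonstrategic} has probability $\theta(\vec{w},\vec{u})/\pi$ over $\vec{z}\sim D$. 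The bookkeeping point to record is that every example drawn in \Cref{alg:modified_perceptron} receives exactly one prediction with the \emph{current} hypothesis $\vec{w}_t$ — including the examples discarded inside the \texttt{while} loop for falling outside $R_t$ — so the number of predictions equals the number of unlabeled examples drawn, and each prediction's disagreement probability is governed by the angle of the hypothesis in force at that moment.

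Next I would bound the per-epoch contribution. By the epoch recursion of \Cref{alg:active_perceptron} together with \Cref{lm:YZ_key_lemma}, an induction on $k$ gives $\theta(\vec{v}_{k-1},\vec{u})\le \pi/2^{k}$ at the start of epoch $k$; moreover the band-based analysis underlying \Cref{lm:YZ_key_lemma} maintains, with high probability, the invariant that every intermediate hypothesis $\vec{w}_t$ inside epoch $k$ satisfies $\theta(\vec{w}_t,\vec{u})=O(\pi/2^{k})$. Consequently each prediction made during epoch $k$ disagrees with $\vec{u}$ with probability $O(2^{-k})$. The number of unlabeled examples drawn in epoch $k$ is, by \Cref{lm:YZ_key_lemma}(3) with $\theta_k=\pi/2^k$ and confidence $\delta/(k(k+1))$, at most $O\!\big(d(\ln d+\ln(k/\delta))^2\,2^{k}\big)$. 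Multiplying the count by the per-prediction disagreement probability, the expected number of additional mistakes in epoch $k$ is $O\!\big(d(\ln d+\ln(k/\delta))^2\big)$: the factor $2^{k}$ cancels the shrinking disagreement rate $2^{-k}$, both arising from $\theta_k=\pi/2^k$, and this cancellation (many draws per band hit, but a correspondingly small disagreement rate) is exactly what keeps the per-epoch mistake count at $O(d\,\mathrm{polylog})$.

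To pass from expectation to a high-probability statement I would condition on the intersection $E$ of the good events of all $k_0=\lceil\log_2(1/\eps)\rceil$ epochs; by the choice of per-epoch confidence $\delta/(k(k+1))$ these failure probabilities sum to at most $\delta$, so $\P[E]\ge 1-\delta$. On $E$ the number of predictions per epoch is fixed up to constants and each additional-mistake indicator has conditional mean $O(2^{-k})$ given the history, so the running sum of indicators is a sum with conditionally bounded means; a Freedman/Bernstein-type supermartingale concentration bound then shows the realized count in epoch $k$ is $O\!\big(d(\ln d+\ln(k/\delta))^2\big)+O(\ln(1/\delta))$ with high probability, the additive term being absorbed into the first. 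Summing over $k=1,\dots,k_0$ and using $k\le k_0=O(\ln\tfrac1\eps)$ to replace $\ln(k/\delta)$ by $\ln\tfrac1\delta+\ln\ln\tfrac1\eps$ yields the claimed total $O\!\big(d\ln\tfrac1\eps(\ln d+\ln\tfrac1\delta+\ln\ln\tfrac1\eps)^2\big)$.

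I expect the main obstacle to be the two places where the argument goes beyond a one-line expectation computation: (i) justifying that the angle bound $\theta(\vec{w}_t,\vec{u})=O(\pi/2^k)$ holds for \emph{every} intermediate hypothesis within an epoch, rather than only at the epoch endpoints, which requires invoking the within-epoch invariant from the proof of \Cref{lm:YZ_key_lemma} rather than merely its statement; and (ii) the concentration step, which is delicate because the total number of predictions is itself random (it is determined by the hitting times of the bands $R_t$) and the additional-mistake indicators are dependent through the evolving hypothesis, so a martingale argument conditioned on $E$, rather than a naive Chernoff bound over i.i.d.\ trials, is needed.
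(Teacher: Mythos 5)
Your proposal follows essentially the same route as the paper's proof: both charge additional mistakes to disagreements with $\vec{u}$, bound the per-epoch count by multiplying the $O\bigl(d(\ln d+\ln(k/\delta))^2\, 2^{k}\bigr)$ unlabeled examples from \Cref{lm:YZ_key_lemma} by the $O(2^{-k})$ per-example disagreement probability, and union-bound over the $k_0=\lceil\log_2(1/\eps)\rceil$ epochs. The two refinements you flag as potential obstacles — the within-epoch angle invariant for intermediate hypotheses and the martingale concentration needed to turn the expected count into a high-probability one — are in fact passed over silently in the paper's own proof, which states the per-epoch bound directly from the expectation computation, so your version is, if anything, the more careful one.
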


\begin{proof}
Similar to the proof of Theorem 3 in \citep{DBLP:conf/nips/YanZ17}, we define, for each iteration $k$, a corresponding event with high individual success probability. Specifically, by \Cref{lm:YZ_key_lemma}, for every $k$, there exists an event $E_k$ such that $\Pr(E_k) \geq 1 - \frac{\delta}{k(k+1)}$. Moreover, on event $E_k$, items 1 through 4 of \Cref{lm:YZ_key_lemma} hold for the input $\vec{w}_0 = \vec{v}_k$.

The excess error of $\vec{w}_t$ relative to $\vec{u}$ is $\theta_t / \pi$, where $\theta_t$ is the angle between $\vec{w}_t$ and $\vec{u}$. Consider \Cref{alg:modified_perceptron} with initial halfspace $\vec{w}_0$ and angle bound $\theta \in [0, \frac{\pi}{2}]$ such that $\theta(\vec{w}_0, \vec{u}) \leq \theta$. Let the number of iterations be $m = \Theta\left(\frac{d}{(1 - 2\eta)^2} \left(\ln \frac{d}{(1 - 2\eta)^2} + \ln \frac{k}{\delta} \right)\right)$. Then, with probability at least $1 - \frac{\delta}{k(k+1)}$, the output halfspace $\vec{w}_m$ satisfies $\theta(\vec{w}_m, \vec{u}) \leq \frac{\theta}{2}$.

By items 2 and 3 in \Cref{lm:YZ_key_lemma}, with probability at least $1-\frac{\delta}{k(k+1)}$, the total number of examples seen during epoch $k$ is: 
$O\left( d \cdot \left(\ln d + \ln \frac{k}{\delta} \right)^2 \cdot \frac{1}{\theta_t} \right)$. Since each example differs in classification from $\vec{u}$ with probability $\frac{\theta}{\pi}$, the number of additional misclassified examples compared to $\vec{u}$ is $O\left( d \cdot \left(\ln d + \ln \frac{k}{\delta} \right)^2 \right)$. The number of total time epochs, $k_0 =  \lceil \log_2(1/\eps) \rceil$ and $k \leq k_0$. 
The total number of epochs is $k_0 = \lceil \log_2(1/\eps) \rceil$, and we have $k \leq k_0$. Therefore, by a union bound over all epochs, with probability at least $1 - \delta$, the total number of additional mistakes compared to $\vec{u}$ is
\( O\left( d \cdot \ln \frac{1}{\eps} \cdot \left(\ln d + \ln \frac{1}{\delta} + \ln \ln \frac{1}{\eps}\right)^2  \right) \).

\end{proof}


\begin{proof}[Proof of~\Cref{thm_main_adversarial_noise}]
\Cref{cor:prediction_rule_not_affected}, \Cref{lm:update_rule_not_affected} and \Cref{thm:YZ_adversarial_thm} imply items 1, 2, 3, and 5. \Cref{lm:mistake_bound} implies item 4.
\end{proof}

\section{Discussion}\label{sec:discussion}

In this section, we provide a discussion of several of our modeling assumptions and extensions of our results beyond them.

\paragraph{Inside vs. On-the-Surface Geometric Assumptions.}  
Much of the prior literature on active classification, e.g.,~\cite{Dasgupta2005Analysis,DBLP:conf/nips/YanZ17}, assumed for simplicity and cleaner mathematical formulations, that all examples lie on the surface of a unit ball; for illustration, the circle in~\Cref{fig:surface_ball}. While this assumption was inconsequential to the goals of previous research, it becomes crucial in strategic scenarios, where the relationship between observed and true features is strongly influenced by the geometry of the space. In the strategic setting, this assumption implies a one-to-one mapping between observed feature vectors, $\vec{x}_i$ in~\Cref{fig:surface_ball} and original feature vectors, $\vec{z}_i$, and provides a straightforward case for analysis, as it allows the original unmanipulated positions of examples to be completely recovered. Specifically, given a linear classifier, e.g., $\vec{v}\cdot \vec{x} \geq 1/c$, and observing an example at position \( \vec{x} \), the true position \( \vec{z} \) of the example on the surface of the unit ball can be recovered through an orthogonal projection, leveraging properties of the utility function, as shown in~\Cref{lm:strategic_action}. %

\begin{figure}[h] 
    \centering
\begin{tikzpicture}[scale=1]  \def\R{2} 

  \draw[thick] (0,0) circle (\R);

  \draw[line width=2pt, purple] (90:\R) arc (90:270:\R);
    \draw[line width=2pt, purple] (-60:\R) arc (-60:60:\R);

  \draw[dashed] (0,-2.5) -- (0,2.6);
  \draw[dashed] (1,-2.5) -- (1,2.6)
    node[above =1pt] {$\vec{v}\cdot \vec{x} = 1/c$};

  \draw[line width=2pt, purple] (1,1.72) -- (1,2.03);

  \draw[line width=2pt, purple] (1,-1.72) -- (1,-2.03);

  \fill (180:\R) circle (1.5pt)
        node[left=4pt] {$\vec{x}_3 = \vec{z}_3$};

  \fill (-30:\R) circle (1.5pt)
        node[right=4pt] {$\vec{x}_2 = \vec{z}_2$};
        
  \fill (72.5:\R) circle (1.5pt)
        node[above =1pt] {$\vec{z}_1$};

  \fill (1,1.9) circle (1.5pt)
        node[above right=1pt] {$\vec{x}_1$};

 \draw[dotted] (72.5:\R) -- (1,1.9);

\end{tikzpicture}
    \caption{Illustration of observed regions of examples, where the original examples are on the surface of a unit ball, as assumed in prior literature on active learning~\cite{Dasgupta2005Analysis,DBLP:conf/nips/YanZ17}. The colored area shows the potential positions where the examples can be observed, following~\Cref{lm:strategic_action}. Under this assumption, there is a one-to-one mapping between the original examples and observed ones, and observing an example $\vec{x}_i$ exactly identifies the original position $\vec{z}_i$. For instance, when observing $\vec{x}_1$ to recover the original point $\vec{z}_1$, one needs to consider the projection of $\vec{x}_1$ on the surface of the pall, in the opposite direction of $\vec{v}$. This one-to-one mapping no longer exists under the relaxed assumptions that original examples can be inside the ball and not just the surface.}
    \label{fig:surface_ball}
\end{figure}
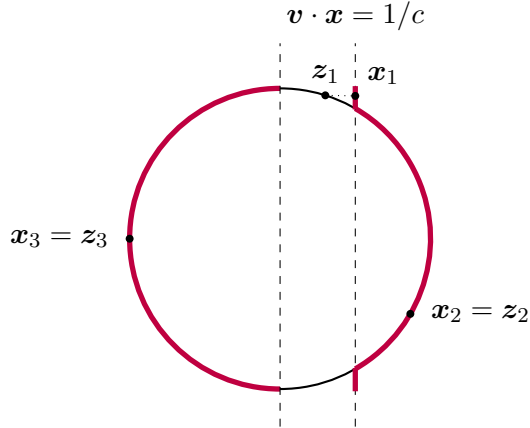

To illustrate the robustness of our techniques, we relax this assumption—an aspect that was less relevant in prior work due to their different focus. While the direction of manipulation can still be computed based on the linear classifier in action (as the direction is always perpendicular to the classifier), the true unmanipulated position \( \vec{z} \) is not recoverable from the observed \( \vec{x} \) because the magnitude of the manipulation is unknown.

\paragraph{Beyond Uniform Distribution.} Although we focus on the uniform distribution for simplicity, our results do not rely strictly on uniformity and the ideas we present extend more generally. In particular, our results extend—up to constant-factor degradations in the guarantees—to any underlying distribution whose probability densities of true examples belonging to each ray from the center of the ball differ by at most a constant factor.

\paragraph{Other Active Learning Algorithms as Baseline.} Although we have focused on the active learning algorithm by~\cite{DBLP:conf/nips/YanZ17} as our baseline, and presented a modified version of this algorithm for the strategic setting, our ideas have the potential to the be applied to other such algorithms. Particularly, following the new prediction rule and label query region, tailored to strategic setting, our analysis provides a reduction from the strategic scenario to non-strategic ones that could be applicable to a variety of active classification algorithms.

\paragraph{Unknown Costs and other Cost Functions.} 
Finally, we conjecture using similar ideas as~\cite{DBLP:conf/sigecom/AhmadiBBN21}, our algorithms can be generalized to scenarios where the learner does not know the manipulation costs of strategic agents upfront and needs to learn throughout the algorithm, and scenarios with different utility and cost functions, e.g., weighted $\ell_1$ costs of manipulation.

\bibliographystyle{alpha}
\bibliography{main}
\newpage
\appendix

\section{Missing Proofs Related to Strategic Behavior}

The following statement characterizes action $\vec{x}$ as a function of $\vec{z}$ and the prediction rule, where $\vec{x}$ is the observed attributes vector by the algorithm.

\lmStrategicAction*
\begin{proof}
    Using the prediction rule in \Cref{alg:modified_perceptron}, an agent with observed vector $\vec{x}$ is classified as positive iff $\vec{v}_t \cdot \vec{x} \geq 1/c$. We analyze the utility-maximizing behavior of agents under the given utility structure in \Cref{sec:model}. The utility of an agent is defined as:
    \[
    \text{Utility}(\vec{x}) = \begin{cases} 
    1 - c \|\vec{x} - \vec{z}_t\| &   \text{if } \vec{v}_t \cdot \vec{x} \geq \frac{1}{c}, \\
    - c \|\vec{x} - \vec{z}_t\| & \text{otherwise.}
    \end{cases}
    \]
    As implied from Observation 1 in \cite{DBLP:conf/sigecom/AhmadiBBN21}, the agent with attributes vector $\vec{z}_t$ only manipulates iff by movement cost at most $c$ can be classified as positive.  
    We now analyze the three cases:
    (i) If $\vec{z}_t \cdot \vec{v}_t < 0$, any movement incurs negative utility, so the agent does not move.
    (ii) If $0 \leq \vec{z}_t \cdot \vec{v}_t < 1/c$, the agent maximizes utility by moving to $\vec{x} \cdot \vec{v}_t = 1/c$ in direction $\vec{v}_t$, i.e., \[\vec{x} = \vec{z}_t + (1/c - \vec{z}_t \cdot \vec{v}_t) \cdot \vec{v}_t.\]
    This is the minimum manipulation cost to be classified as positive.
    (iii) Finally, if $\vec{z}_t \cdot \vec{v}_t \geq 1/c$, the agent already achieves maximum utility without moving.
\end{proof}

\section{Initialization Step}

\Cref{alg:active_perceptron} assumes that the initial vector $\vec{v}_0$ forms an angle less than $\pi/2$ with the true separator $\vec{u}$. To obtain such a vector, we adopt an approach similar to that of \cite{DBLP:conf/nips/YanZ17,DBLP:conf/stoc/AwasthiBL14}. Intuitively, \Cref{alg:master_adversarial_noise} begins with two opposite vectors and, after a small number of trials, selects the one with lower classification error. This initialization step incurs only a constant overhead in terms of label, time, and mistake complexities.

\begin{algorithm}[H]
\caption{Master Algorithm in Adversarial Noise  Setting, (adapted from \cite{DBLP:conf/nips/YanZ17,DBLP:conf/stoc/AwasthiBL14})}\label{alg:master_adversarial_noise}
\begin{algorithmic}
\State \textbf{Input:} Labeling oracle $\cal O$, confidence $\delta$.
\State \textbf{Output:} A halfspace $\vec{v}$ such that $\theta(\hat{\vec{v}}, \vec{u}) \leq \pi/4$.
\State $\vec{v}_0 \gets (1,0,\ldots,0)$.
\State $\vec{v}_+ \gets \text{Active-Strategic-Perceptron}({\mathcal O}, \vec{v}_0, \frac{1}{16}, \frac{\delta}{3}, \{m_k\}, \{b_k\}, \{c\})$.
\State $\vec{v}_- \gets \text{Active-Strategic-Perceptron}({\mathcal O}, -\vec{v}_0, \frac{1}{16}, \frac{\delta}{3}, \{m_k\}, \{b_k\}, \{c\})$.
\State {If $\theta(\vec{v}_{-},\vec{v}_{+})\leq \pi/20$ return $\vec{v}_{+}$.}
\State {Define region $R := \{\vec{Z}: \sign(\vec{v}_+ \cdot \vec{z} ) \neq \sign(\vec{v}_- \cdot \vec{z}) \}$.}
\State {$S = \emptyset$};
\State {$m_{-}= m_{+} = 0$}

\\
\While{$|S| < 8\ln\frac{6}{\delta}$}
    {\State Observe $\vec{x}$, {where $\vec{z}$ is a fresh draw from $D$}.
    \State Predict negative.\\
    \If {$\vec{x}$ is in $R$} 
    {\State Query label $Y$.
    \State $m_{-} += \mathbf{1}\{\sign{\vec{v}_{-}}\cdot \vec{x} = Y\}$
    \State $m_{+} += \mathbf{1}\{\sign{\vec{v}_{-}}\cdot \vec{x} = Y\}$
    \State $S = S \cup \{\vec{x}\}$
    }}\\
\If{$m_+ \leq m_-$}
    {{\State \Return $\vec{v}_+$.}}
\Else
    {\State \Return $\vec{v}_-$.}
\end{algorithmic}
\end{algorithm}

By \Cref{cor:prediction_rule_not_affected} and \Cref{thm_main_adversarial_noise}, we can effectively disregard the strategic behavior of agents and apply Theorem 12 from \cite{DBLP:conf/nips/YanZ17}, which implies the following.

\begin{theorem}
Suppose \Cref{alg:master_adversarial_noise} has inputs labeling oracle $\mathcal{O}$ that satisfies $\nu$-inseparability condition with respect to $\vec{u}$, confidence $\delta$, and sample schedule $\{m_k\}$ where
\[
m_k = \Theta\left( d \left( \ln d + \ln \frac{k}{\delta} \right) \right),
\]
and band width $\{b_k\}$ where
\[
b_k = \tilde{\Theta}\left( \frac{2^{-k}}{\sqrt{d}} \right).
\]
Then, with probability at least $1 - \delta$, the output $\hat{\vec{v}}$ is such that $\theta(\hat{\vec{v}}, \vec{u}) \leq \frac{\pi}{4}$. Furthermore:
\begin{enumerate}
    \item the total number of label queries to oracle $\mathcal{O}$ is at most $\tilde{O}(d)$;
    \item the total number of unlabeled examples drawn is $\tilde{O}(d)$;
    \item the total number of additional mistakes compared to $\vec{u}$ is $\tilde{O}(d)$;
    \item the algorithm runs in time $\tilde{O}(d^2)$.
\end{enumerate}
\end{theorem}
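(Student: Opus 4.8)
The plan is to prove the statement by reducing \Cref{alg:master_adversarial_noise} in the strategic setting to its non-strategic counterpart (the initialization procedure of~\cite{DBLP:conf/nips/YanZ17}) and then invoking that guarantee, supplying the reduction's two ingredients explicitly. First I would argue that every observation the master algorithm makes comes from unmanipulated data. The two calls to \texttt{Active-Strategic-Perceptron} are handled exactly as in the proof of \Cref{thm_main_adversarial_noise}: by \Cref{cor:prediction_rule_not_affected} and \Cref{lm:update_rule_not_affected}, their outputs $\vec{v}_+,\vec{v}_-$ are distributed identically to the non-strategic active-Perceptron runs started from $\vec{v}_0$ and $-\vec{v}_0$. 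For the selection phase, the algorithm predicts negative on every arriving agent, so by \Cref{lm:strategic_action} no agent can gain utility by moving and hence $\vec{x}=\vec{z}$ throughout; consequently the disagreement test and the counts $m_+,m_-$ are computed on true feature vectors. This reduces the entire procedure to the non-strategic case.

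Setting $\eps=1/16$, \Cref{thm_main_adversarial_noise} guarantees that whichever of $\vec{v}_0,-\vec{v}_0$ makes an angle at most $\pi/2$ with $\vec{u}$ yields an output $g\in\{\vec{v}_+,\vec{v}_-\}$ with $\theta(g,\vec{u})\le\pi/16$; at least one start qualifies since $\vec{v}_0\cdot\vec{u}$ and $(-\vec{v}_0)\cdot\vec{u}$ cannot both be negative. The correctness of the final selection then follows from a triangle-inequality case analysis on $\theta(\vec{v}_+,\vec{v}_-)$. If the early-return test fires ($\theta(\vec{v}_+,\vec{v}_-)\le\pi/20$), then since one candidate lies within $\pi/16$ of $\vec{u}$, both lie within $\pi/16+\pi/20<\pi/4$, so returning $\vec{v}_+$ is safe. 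Otherwise I would split on whether $\theta(\vec{v}_+,\vec{v}_-)\le 3\pi/16$: in that range the other candidate is still within $\pi/16+3\pi/16=\pi/4$ of $\vec{u}$, so any output is acceptable; and if $\theta(\vec{v}_+,\vec{v}_-)>3\pi/16$ then $\P[R]=\theta(\vec{v}_+,\vec{v}_-)/\pi>3/16$, so the total disagreement $\theta(g,\vec u)/\pi\le 1/16$ between $g$ and $\vec u$ (\Cref{lm:error_diff_nonstrategic}) forces $g$ to agree with $\vec{u}$ on at least a $2/3$ fraction of $R$. Since the noise within $R$ is at most $\nu/\P[R]$ and $\nu$ is small, $g$ agrees with the true labels $Y$ on a constant-margin majority of $R$, so a Hoeffding bound over the $\Theta(\ln(1/\delta))$ points collected in $S$ identifies $g$ except with probability $O(\delta)$.

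For the complexity bounds I would substitute the constant $\eps=1/16$ into the four guarantees of \Cref{thm_main_adversarial_noise}: the $\ln(1/\eps)$ and $1/\eps$ factors collapse to constants, so each \texttt{Active-Strategic-Perceptron} call uses $\tilde O(d)$ labels and unlabeled examples, makes $\tilde O(d)$ excess mistakes, and runs in $\tilde O(d^2)$ time. The selection phase draws $O(\ln(1/\delta))$ examples to populate $S$ (since $\P[R]>1/20$ once the early-return test fails), querying $O(\ln(1/\delta))$ labels and making $\tilde O(1)$ mistakes, all absorbed into the $\tilde O(d)$ terms. A union bound over the two Perceptron runs (charged $\delta/3$ each) and the Hoeffding step yields overall success probability $1-\delta$.

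The main obstacle is the reduction together with the selection-correctness argument rather than the arithmetic. One must verify that predicting negative genuinely suppresses all manipulation so that $R$, $m_+$, and $m_-$ reflect true features, and that the thresholds $\pi/16$, $\pi/20$, and $3\pi/16$ are chosen consistently so that the voting step is relied upon only to separate two candidates precisely when exactly one is good and the disagreement region is provably large. Reconciling the early-return threshold with the constant margin needed for the Hoeffding bound—and confirming that the sample size $|S|=8\ln(6/\delta)$ suffices for the resulting gap—is the delicate part.
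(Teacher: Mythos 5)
Your proposal is correct and follows essentially the same route as the paper, which simply observes that \Cref{cor:prediction_rule_not_affected} (negatively classified points are unmanipulated, which covers the always-predict-negative selection phase) and \Cref{thm_main_adversarial_noise} reduce the procedure to the non-strategic case and then cites Theorem 12 of \cite{DBLP:conf/nips/YanZ17}. The only difference is that you explicitly re-derive the triangle-inequality and Hoeffding selection argument that the paper delegates wholesale to that cited theorem.
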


\end{document}